\theoremstyle{plain}
\newtheorem{theorem}{Theorem}[section]
\newtheorem{proposition}[theorem]{Proposition}
\newtheorem{lemma}[theorem]{Lemma}
\newtheorem{corollary}[theorem]{Corollary}
\theoremstyle{definition}
\theoremstyle{remark}
\newcommand{\E}{\mathbb{E}}
\newcommand{\one}[1]{\mathbb{I}\{#1\}}
\newcommand{\norm}[1]{\|#1\|}
\newcommand{\R}{\mathbb{R}}
\newcommand{\cD}{\mathcal{D}}
\newcommand{\cE}{\mathcal{E}}
\newcommand{\cL}{\mathcal{L}}
\newcommand{\cN}{\mathcal{N}}
\newcommand{\mc}[1]{\mathcal{#1}}
\newcommand{\V}{\mathbb{V}}
\renewcommand{\b}[1]{\bm{\mathbf{#1}}}
\newcommand{\const}{\mathrm{const}}
\newcommand{\Cov}{\mathrm{Cov}}
\newcommand{\D}{\mathrm{D}_{\mathrm{KL}}}
\newcommand{\tr}{\mathrm{tr}}
\renewcommand{\epsilon}{\varepsilon}
\DeclareMathOperator*{\argmin}{arg\ min}
\DeclareMathOperator*{\argmax}{arg\ max}
\theoremstyle{plain}
\def\sP{{\mathbb{P}}}
\def\sQ{{\mathbb{Q}}}
\newcommand{\wh}[1]{\hat{#1}}
\newcommand{\tp}{^{\top}}
\newcommand{\bzero}{\boldsymbol{0}}
\newcommand{\ba}{\boldsymbol{a}}
\newcommand{\bA}{\boldsymbol{A}}
\newcommand{\bB}{\boldsymbol{B}}
\newcommand{\bC}{\boldsymbol{C}}
\newcommand{\bD}{\boldsymbol{D}}
\newcommand{\bM}{\boldsymbol{M}}
\newcommand{\bb}{\boldsymbol{b}}
\newcommand{\bhB}{\boldsymbol{\hat{B}}}
\newcommand{\bx}{\boldsymbol{x}}
\newcommand{\bX}{\boldsymbol{X}}
\newcommand{\by}{\boldsymbol{y}}
\newcommand{\bv}{\boldsymbol{v}}
\newcommand{\bY}{\boldsymbol{Y}}
\newcommand{\bF}{\boldsymbol{F}}
\newcommand{\bK}{\boldsymbol{K}}
\newcommand{\bU}{\boldsymbol{U}}
\newcommand{\bP}{\boldsymbol{P}}
\newcommand{\bu}{\boldsymbol{u}}
\newcommand{\bV}{\boldsymbol{V}}
\newcommand{\bI}{\boldsymbol{I}}
\newcommand{\bq}{\boldsymbol{q}}
\newcommand{\bsT}{\boldsymbol{\mathcal{T}}}
\newcommand{\bshT}{\wh{\boldsymbol{\mathcal{T}}}}
\newcommand{\eps}{\varepsilon}
\newcommand{\ve}{\varepsilon}
\newcommand{\beps}{\boldsymbol{\varepsilon}}
\newcommand{\balpha}{\boldsymbol{\alpha}}
\newcommand{\bhalpha}{\wh{\boldsymbol{\alpha}}}
\newcommand{\bxi}{\boldsymbol{\xi}}
\newcommand{\bmu}{\boldsymbol{\mu}}
\newcommand{\bhmu}{\wh{\boldsymbol{\mu}}}
\newcommand{\btheta}{\boldsymbol{\theta}}
\newcommand{\bhtheta}{\wh{\boldsymbol{\theta}}}
\newcommand{\bvartheta}{\boldsymbol{\vartheta}}
\newcommand{\bTheta}{\boldsymbol{\Theta}}
\newcommand{\bPsi}{\boldsymbol{\Psi}}
\newcommand{\bSigma}{\boldsymbol{\Sigma}}
\newcommand{\bhSigma}{\wh{\boldsymbol{\Sigma}}}
\newcommand{\bGamma}{\boldsymbol{\Gamma}}
\newcommand{\bLambda}{\boldsymbol{\Lambda}}
\newcommand{\pr}[1]{\left( #1 \right)}
\newcommand{\br}[1]{\left[ #1 \right]}
\newcommand{\cbr}[1]{\left\{ #1 \right\}}
\newcommand{\bmid}{\;\middle|\;}
\newcommand*\diff{\mathop{}\!\mathrm{d}}
\newcommand{\ind}[1]{\mathbb{I}\left\{ #1 \right\}}
\newcommand{\deln}{^{\backslash n}}
\newcommand{\distiid}{\mathbin{\overset{\mathrm{iid}}{\sim}}}
\DeclareMathOperator{\supp}{\text{supp}}
\DeclareMathOperator{\spn}{span}
\newcommand{\hL}{\wh{\cL}}
\newcommand{\MLE}{^{\text{\scshape{mle}}}}
\newcommand{\brls}{^{\text{\scshape{wbrls}}}}
\newcommand{\gauss}{^{\text{\scshape{g}}}}
\newcommand{\pgauss}{p\gauss}
 \newcommand{\todoi}[2][]{\todo[size=\scriptsize,color=red!20!white,#1]{Ilja: #2}}
 \newcommand{\todoc}[2][]{\todo[size=\scriptsize,color=blue!20!white,#1]{Csaba: #2}}
\icmltitlerunning{A Distribution-Dependent Analysis of Meta Learning}
\begin{document}

\twocolumn[
\icmltitle{A Distribution-Dependent Analysis of Meta-Learning}



\icmlsetsymbol{equal}{*}

\begin{icmlauthorlist}
\icmlauthor{Mikhail Konobeev}{uofa}
\icmlauthor{Ilja Kuzborskij}{dm}
\icmlauthor{Csaba Szepesv\'ari}{uofa,dm}
\end{icmlauthorlist}

\icmlaffiliation{uofa}{Computing Science Department, University of Alberta,
Edmonton, Alberta, Canada}
\icmlaffiliation{dm}{DeepMind, London, United Kingdom}

\icmlcorrespondingauthor{Mikhail Konobeev}{konobeev.michael@gmail.com}

\icmlkeywords{Machine Learning, ICML}

\vskip 0.3in
]



\printAffiliationsAndNotice{}  

%
\begin{abstract}

A key problem in the theory of meta-learning is to understand
how the task distributions influence 
transfer risk, the expected error of a meta-learner on a new task
drawn from the unknown task distribution. 
In this paper, focusing on fixed design linear regression with Gaussian noise and
a Gaussian task (or parameter) distribution, we give 
distribution-dependent lower bounds on the transfer risk of \emph{any}
algorithm, while we also show that a \emph{novel, weighted} version
of the so-called biased regularized regression method is able to match these lower bounds up to a fixed constant factor. Notably, the weighting is derived from the covariance of the Gaussian task distribution.
Altogether, our results provide a precise characterization of the difficulty of meta-learning in this Gaussian setting. 
While this problem setting may appear simple, we show that it is rich enough to unify the ``parameter sharing'' and ``representation learning'' streams of meta-learning; in particular, representation learning is obtained as the special case when the covariance matrix of the task distribution is unknown.
For this case
we propose to adopt the EM method, which is shown to enjoy efficient updates in our case. 
The paper is completed by an empirical study of EM. In particular, our experimental results show that 
the EM algorithm can attain the lower bound as the number of tasks grows,
while the algorithm is also successful in competing with its alternatives when used in a representation learning context. 
\end{abstract}

\section{Introduction}
In meta-learning, a learner uses data from past tasks in an attempt to speed up learning on future tasks.
Whether a speedup is possible depends on whether the new task is ``similar'' to the previous ones.
In the formal framework of statistical \emph{meta-learning}
of  \citet{baxter2000model},
the learner is given a sequence of training ``sets''.
The data in each set is independently sampled from an unknown distribution specific to the set, or task,
while each such \emph{task distribution}
is independently sampled from an unknown
meta-distribution, which we shall just call the \emph{environment}. \todoc{do we use environment at all later?}
The learner's \emph{transfer risk} then is its expected prediction loss
on a \emph{target} task freshly sampled from the environment.
\emph{
Can a learner achieve smaller transfer risk by using data from the possibly unrelated tasks? What are the limits of reducing transfer risk?} 



%
%
As an instructive example, consider a popular approach where each of the $n$ tasks is associated with ground truth parameters $\btheta_i \in \R^d$, each of which is assumed to lie close to an unknown vector $\balpha$ that characterizes the environment.
To estimate the unknown parameter vector of the last task, one possibility is to employ
\emph{biased regularization}~\citep{yang2007cross,kuzborskij2013stability,pentina2014pac}, that is, solve the optimization problem
\begin{align*}
  \min_{\btheta} \cbr{\hL_n(\btheta)  + \frac{\lambda}{2} \|\btheta - \bhalpha\|^2}
\end{align*}
where $\hL_n(\cdot)$ is the empirical loss on the $n$th task,
$\lambda>0$ is a regularization parameter that governs the strength of the regularization term
that biases the solution towards
$\bhalpha$, an estimate of $\balpha$. Here, $\bhalpha$ could be obtained,
for example, by averaging parameters estimated on previous tasks~\citep{denevi2018learning}.
This procedure implements the maxim
``learn on a new task, but stay close to what is already learned'',
			which is the basis of many successful meta-learning algorithms, including the above,
			and MAML~\cite{finn2017model}.
                        \todoc{I removed the catastrophic forgetting reference as I could not make out why we are talking about it. Somehow it was too unconnected. We can put it back, but then we should say something substantial about it.}
\todoi{They use a biased regularization, but with a very different metric. Probably we don't have space to discuss this in detail here.}

Early theoretical work in the area focused on studying the
\emph{generalization gap}, which is the difference between the transfer risk and its empirical counterpart. 
\citet{maurer2005algorithmic} gives an upper bound on this gap for a concrete algorithm
which is similar to the biased regularization approach discussed above.
While these bounds are reassuring, they need further work to fully quantify the benefit of meta-learning,
i.e., the gap between the risk of a standard (non-meta) learner and the transfer risk of a meta-learner.
Numerous other works have shown bounds on the generalization gap when using biased regularization, in one-shot learning~\citep{kuzborskij2016fast}, meta-learning~\citep{pentina2014pac}, and sequential learning of tasks~\citep{denevi2018learning,denevi2019learning,balcan2019provable,khodak2019adaptive,finn2019online}.
While some of these works introduced a dependence on the environment distribution, or on the ``regularity'' of the sequence of tasks as appropriate, \emph{they still leave open the question whether the shown dependence is best possible}.  \todoc{true? also, what dependence? how do these results compare to what we do here? any of these bounds will match our lower bound? for $\bSigma=I$?}
%
%

In summary, the main weakness of the cited literature is the \emph{lack of (problem dependent) lower bounds}:
To be able to separate ``good'' meta-learning methods from ``poor'' ones, one needs to know the best achievable performance in a given problem setting.
In learning theory, the most often used lower bounds are \emph{distribution-free} or \emph{problem independent}.
In the context of meta learning, the distribution refers to the distribution over the tasks, or the environment.
The major limitation of a distribution-free approach is that if the class of environments is sufficiently rich, all that the bound will tell us is
that the best standard learner
will have  similar performance to that of
the best meta-learner since the worst-case environment will be one where the tasks are completely unrelated.
As an example, for a linear regression setting with $d$-dimensional parameter vectors,
\citet{lucas2020theoretical} gives the worst-case lower bound
$\Omega(d/ ((2r)^{-d} M + m) )$ for parameter identification where the error is measured in the squared Euclidean distance.
Here, $M$ is the total number of data points in the identically-sized training sets, $m$ is the number of data points in the training set of the target task, and $r\ge 1$ is the radius of the ball that contains the parameter vectors.%
\footnote{This result is stated in Theorem 5 in their paper and
the setting is meta linear regression. For readability, we dropped some constants, such as label noise variance and slightly generalized the cited result by introducing $r$,
which is taken to be $r=1$ in their paper. Indeed, the analysis in the paper is not hard to modify to get the dependence shown on $r$.}
It follows that as $r\to\infty$, the lower bound reduces to that of linear regression and we see that any method that ignores the tasks is competitive with the best meta-learning method.
The pioneering works of \citet{maurer2009transfer,maurer2016benefit} avoid this pathology by introducing empirical quantities that capture task relatedness in the context of linear regression with a common low-dimensional representation.

The bounds can be refined and the pathological limit can be avoided by restricting the set of environments.
This approach is taken by \citet{du2020few} and \citet{tripuraneni2020provable}
who also consider linear regression where the tasks share a common low-dimensional representation.
Their main results show that natural algorithms can take advantage of this extra structure.
In addition, \citet{tripuraneni2020provable} also shows a lower bound on the transfer risk which is
matched by their method
up to logarithmic factors and problem dependent ``conditioning'' constants.

\begin{figure*}
\centering
\includegraphics[width=0.31\linewidth]{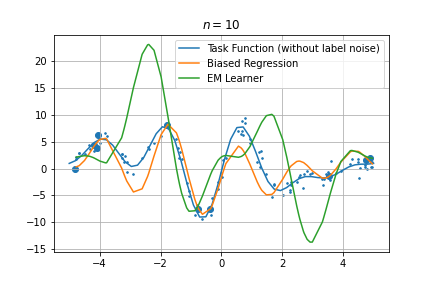}
\includegraphics[width=0.31\linewidth]{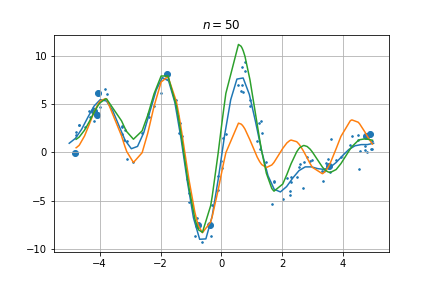}
\includegraphics[width=0.31\linewidth]{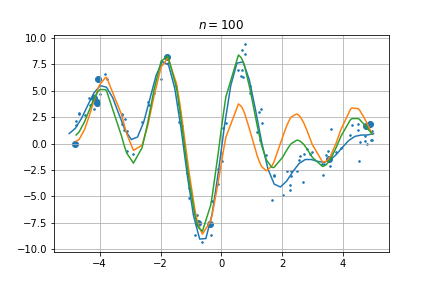}
\caption{Examples of predictions on the synthetic,  `Fourier' meta-learning problem.
Training data is shown in bold, small dots show test data.
We also show the predictions for two learners (at every input) and the target function.
The column correspond to outputs
obtained training on $n\in\{10,50,100\}$ tasks.
Our new algorithm, EM learner, performs quite well.
}
\label{fig:fourier-examples}
\end{figure*}

\paragraph{Our contributions.}
In the present paper we revisit the framework underlying biased regularized regression.
In particular, we propose to study the case
when the unknown parameter vectors for the tasks are generated from a normal distribution with some mean and covariance matrix.
First, we consider the case when the mean is unknown while the covariance matrix is known.
For this case, in the context of fixed-design
linear regression, we prove distribution-dependent lower and upper bounds, which essentially match each other.
The lower bound is a direct lower limit on the transfer risk of \emph{any} meta-learning method.
The upper bound is proven for a version of a \emph{weighted} biased regularized least-squares regression.
Here, the parameters are biased towards the maximum likelihood estimate of the unknown common mean of the task parameter vectors,
and the weighting is done with respect to the inverse covariance matrix of the distribution over the task parameter vectors.
We show that the maximum likelihood estimator can be efficiently computed, which implies that the entire procedure is efficient.
As opposed to the work of \citet{tripuraneni2020provable}, the gap between the lower and upper bounds is a universal constant, regardless of the other parameters of the meta-learning task.
The matching lower and upper bounds together provide a \emph{precise and fine-grained characterization of the benefits of meta-learning}.
Our algorithm shows how one should combine datasets of different cardinalities and suggest specific ways of tuning biased regularized regression based on the noise characteristics of the data and the task structure.
Our lower bounds are based on a rigorously proven novel observation, which may be of interest on its own. According to this observation, any predictor can be treated as a plug-in method that first estimates the unknown task distribution parameters. Hence, to prove a lower bound for the transfer risk, it suffices to do so for plug-in estimators.

In the last part of the paper we consider the case when the covariance matrix
of the task parameter vector distribution is unknown.
Importantly, this case can be seen as a way of \emph{unifying} the representation learning stream of meta-learning with the parameter sharing stream.
In particular, if the covariance matrix is such that $d-s$ of its eigenvalues tend to zero, while the other eigenvalues $s$ are allowed to take on arbitrarily large values, the problem becomes essentially the same as the representation learning problem of
\citet{du2020few,tripuraneni2020provable}.

While we provide no theoretical analysis for this case,
we give a detailed description of how the \ac{EM} algorithm can be used to tackle this problem.
In particular, we show that in this special case the \ac{EM} algorithm
enjoys an efficient \emph{implementation}:
we show how to implement the iterative steps in the loop of the \ac{EM} algorithm
in an efficient way. The steps of this algorithm are
given as closed-form expressions, which are both
intuitive and straightforward to implement.
We demonstrate the effectiveness of the resulting procedure
on a number of synthetic and real benchmarks; \cref{fig:fourier-examples} shows an example on a synthetic benchmark problem, comparing our EM algorithm with the earlier cited (unweighted) ``biased regression'' procedure.
As can be seen from the figure, the EM based learner is significantly more effective.
Further experiments
suggest that the EM learner is almost as effective as the optimal biased weighted regularized regression procedure that is given the unknown parameters.
We found that the EM learner is also competitive as a representation learning algorithm by comparing it to  the algorithm of \citet{tripuraneni2020provable} that is based on the
``method-of-moments'' technique.

\todoc{I removed the alternating optimization comment because I don't see how it helps anyone.}




\section{Setup} 
In the statistical approach to \emph{meta-learning}~\citep{baxter1998theoretical,baxter2000model} 
the learner observes a sequence of \emph{training tuples} $\cD = (D_i)_{i=1}^n$, distributed according to a
\emph{random} sequence of \emph{task} distributions $(P_i)_{i=1}^n$, i.e.\ $D_i \sim P_i$, and furthermore task distributions
are sampled independently from each other from a fixed and unknown \emph{environment} distribution $\mc{P}$.
The focus of this paper is \emph{linear regression with a fixed design} and therefore
each training tuple 
 $D_i = \big((\bx_{i,1}, Y_{i,1}), \ldots, (\bx_{i,m_i}, Y_{i,m_i})\big)$
consists of $m_i$ fixed training \emph{inputs} from $\R^d$ and corresponding random, real-valued \emph{targets} satisfying
\begin{align}
  &Y_{i,j} = \btheta_i\tp \bx_{i,j} + \eps_{i,j}, \label{eq:model}\\
  \text{where} \quad &\eps_{i,j} \distiid \cN(0, \sigma^2), \ \btheta_i \distiid \cN(\balpha, \bSigma) \,,\nonumber
\end{align}
while $(\eps_{i,j})_{i,j}$ and $(\btheta_i)_i$ are also independent \todoc{right?}
from each other.%
\footnote{Technically, $P_i$ consists of Dirac deltas on inputs.} 
A meta-learning environment in this setting is thus given by $\balpha$ and the noise parameters $(\sigma^2, \bSigma)$.
Initially, we will assume that $(\sigma^2, \bSigma)$ is known, while $\balpha$ (just like $(\btheta_i)_i$) is unknown.
The learner observes $\cD$ and needs to produce a prediction of the value 
\begin{align*}
Y = \btheta_n\tp \bx + \eps~
\end{align*}
where $\eps \sim \cN(0,\sigma^2)$ and where $\bx\in \R^d$ is a fixed (non-random) point. 
Our theoretical results will trivially
extend to the case when the learner needs to produce predictions for a sequence of input points
or a fixed distribution over these,
as often considered in meta-learning literature~\citep{denevi2018learning,du2020few}.
\todoi{Added some refs connecting our model to the literature as reviewer requested}
The (random) transfer risk of the learner is defined as
\begin{align*}
  \cL(\bx) &= \E\br{(Y - \hat Y)^2 \bmid \cD }\,.
\end{align*}
The setting described above coincides with the standard fixed-design linear regression setup for $n=1$ and $\bSigma \to \bzero$, for which the behavior of risk is well understood. \todoc{citation}
In contrast, the question that meta-learning poses is whether having $n > 1$, one can design a predictor that achieves lower risk
compared to the approach that only uses the target data. 
Naturally, this is of a particular interest in the small sample regime when for all tasks, $m_i \ll n$, that is when facing scarcity of the training data but having many tasks.
Broadly speaking, this reduces to understanding the behavior of the risk in terms of the interaction between the number of tasks $n$, their sample sizes $(m_1, \ldots, m_n)$, and the task structure given by the noise parametrization $(\sigma^2, \bSigma)$.

\section{Sufficiency of Meta-mean Prediction}
%
In this section we show that there is no loss of generality in considering ``plug-in'' predictors that predict first the unknown meta-mean $\balpha$. 
We also show that biased regularized least-squares estimator belongs to this family.
We start with some general remarks and notation. 

Throughout the rest of the paper, for real symmetric matrices $\bA$ and $\bB$, we use $\bA \succeq \bB$ to indicate that the matrix $\bA - \bB$ is \ac{PSD}.
For $\bx \in \R^d$ and \ac{PSD} matrix $\bA$, 
we let $\|\bx\|_{\bA} = \sqrt{\bx\tp \bA \bx}$. \todoc{I removed from here that $\bx\ne \b0$. We don't quite use this. Only once. So let's add it there.} \todoc{We may want to add our notation concerning Gaussians here.} We use $\norm{\bx}$ to denote the $2$-norm of $\bx$.
%
In the following we will use matrix notation aggregating inputs, targets, and parameters over multiple tasks.
In particular, let the cumulative sample size of all tasks be $M = m_1 + \dots + m_n$
and introduce aggregates for inputs and targets as follows:
\begin{align*}
  &\bX_i =
  \underbrace{
  \begin{bmatrix}
    \bx_{i,1}\tp\\
    \vdots\\
    \bx_{i,m_i}\tp
  \end{bmatrix}
  }_{m_i \times d},
  \bPsi
  =
  \underbrace{
  \begin{bmatrix}
    \bX_1\\
    \vdots\\
    \bX_n
  \end{bmatrix}
  }_{M \times d},
  \bY_i =
  \underbrace{
  \begin{bmatrix}
    Y_{i,1}\\
    \vdots\\
    Y_{i,m_i}
  \end{bmatrix}
  }_{m_i \times 1},
  \bY =
  \underbrace{
  \begin{bmatrix}
    \bY_1\\
    \vdots\\
    \bY_n
  \end{bmatrix}
  }_{M \times 1}\\
  &\bX =
  \underbrace{
\begin{bmatrix}
    \bX_1  & \dots & \bzero\\ 
    \vdots & \ddots & \vdots\\
    \bzero  & \dots & \bX_n
  \end{bmatrix}               
  }_{M \times nd},
  \quad
  \bTheta =
  \underbrace{
  \begin{bmatrix}
    \btheta_1\\
    \vdots\\
    \btheta_n
  \end{bmatrix}
  }_{nd \times 1}~.  
\end{align*}
The matrix representation allows us to compactly state the regression model
simultaneously over all tasks.
In particular, for the $M$-dimensional noise
vector $\beps \sim \cN(\bzero, \sigma^2 \bI)$: 
\begin{equation}
  \label{eq:N_alpha_equiv}
  \bY = \bX \bTheta + \beps
  \quad \Leftrightarrow \quad
  \bY \sim \cN(\bPsi \balpha, \bK)
\end{equation}
where $\balpha$ is a meta-mean of model~\eqref{eq:model} and $\bK$ is
the \emph{marginal} covariance matrix defined as $\bK = \bX(\bI\otimes\bSigma)\bX\tp + \sigma^2\bI$ where $\otimes$ stands for the Kronecker product.
Note that the above equivalence comes from a straightforward observation that
a linear map $\b{X}_i$ applied to the Gaussian r.v.\ $\btheta_i$ is itself
Gaussian with mean
$\E[\bY_i] = \bX_i \E[\btheta_i] = \bX_i \balpha$
and covariance $\bX_i\bSigma\bX_i^T + \sigma^2\b{I}$ which follows
from the property that for any random vector $\bxi$ with covariance matrix $\bC$, and matrix $\bA$ of appropriate dimensions, covariance matrix of $\bA \bxi$ is $\bA \bC \bA\tp$, ultimately giving \cref{eq:N_alpha_equiv}.
%
\subsection{Plug-In Predictors and their Sufficiency}
Both our lower and upper bounds will be derived from analyzing a family of ``plug-in'' 
predictors that aim to estimate $\btheta_n$ through estimating $\balpha$.
As we shall see, weighted biased regularization is also member of this family.

The said family is motivated by applying the well-known bias-variance decomposition to the risk of an arbitrary predictor $A : \supp(P_1) \times \dots \times \supp(P_n) \times \R^d \to \R$. Namely,
\begin{align*}
  \mc{L}(\bx)
    &=
    \E\br{(Y - A(\cD, \bx))^2 \bmid \cD }\\
  &=
    \E\br{
    \pr{\E[Y \,|\, \mc{D}] - A(\cD, \bx)}^2
    +
    \V[Y \,|\, \mc{D}]
    \bmid \cD 
    }
\end{align*}
where we used the law of total expectation and the fact that for any r.v.\ $\xi$, $\E[\xi^2] =
\E[\xi]^2 + \V[\xi]$.
Since the variance term does not depend on $A$, it follows that the prediction problem reduces to predicting the posterior mean $\E[Y \,|\, \mc{D}]$, which,
in our setting, can be given in closed form:
\begin{proposition}
\label{cor:E_cond_D}
  Let $Y = \btheta_n\tp \bx + \eps$ for $\eps \sim \cN(0, \sigma^2)$ and some $\bx \in \R^d$.
  Then,
  \begin{align*}
    \E[Y \,|\, \mc{D}] &= \bx\tp \bsT \pr{\bSigma^{-1}\balpha + \frac{1}{\sigma^2}\bX_n\tp\bY_n}\\
    \text{where} \quad \bsT &= \pr{
           \bSigma^{-1}
           +
           \frac{1}{\sigma^2}\bX_n\tp\bX_n}^{-1}~.
  \end{align*}
\end{proposition}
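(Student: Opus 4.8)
The plan is to reduce the claim to a textbook conjugate Gaussian posterior computation. First, since $\eps\sim\cN(0,\sigma^2)$ is independent of $\mc{D}$ and has zero mean, conditioning on $\mc{D}$ gives $\E[Y\,|\,\mc{D}] = \bx\tp\E[\btheta_n\,|\,\mc{D}]$, so it suffices to identify the posterior mean of $\btheta_n$. Next I would argue that this posterior depends only on the last task's data: each $D_i$ is a deterministic function of $\btheta_i$, the fixed inputs, and the noise $\beps_i$, and by assumption $(\btheta_i)_i$ are i.i.d.\ and independent of the noise, so $(\btheta_n,\beps_n)$ — hence $(\btheta_n, D_n)$ — is independent of $(D_1,\dots,D_{n-1})$. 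Therefore $\E[\btheta_n\,|\,\mc{D}] = \E[\btheta_n\,|\,D_n] = \E[\btheta_n\,|\,\bY_n]$, the last equality because the inputs $\bX_n$ are non-random.

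It then remains to compute the posterior mean of $\btheta_n$ under the prior $\btheta_n\sim\cN(\balpha,\bSigma)$ and the likelihood $\bY_n\,|\,\btheta_n \sim \cN(\bX_n\btheta_n,\sigma^2\bI)$ coming from \eqref{eq:model}. I would write the log-posterior, up to an additive constant in $\btheta_n$, as
\begin{align*}
  \log p(\btheta_n\,|\,\bY_n)
  &= -\tfrac{1}{2\sigma^2}\norm{\bY_n - \bX_n\btheta_n}^2 - \tfrac12(\btheta_n-\balpha)\tp\bSigma^{-1}(\btheta_n-\balpha) + \const\\
  &= -\tfrac12\,\btheta_n\tp\bsT^{-1}\btheta_n + \btheta_n\tp\pr{\bSigma^{-1}\balpha + \tfrac{1}{\sigma^2}\bX_n\tp\bY_n} + \const ,
\end{align*}
where the second line uses $\bsT^{-1} = \bSigma^{-1} + \tfrac{1}{\sigma^2}\bX_n\tp\bX_n$, which is positive definite — hence invertible, so that $\bsT$ is well defined — since $\bSigma^{-1}\succ\bzero$ and $\bX_n\tp\bX_n\succeq\bzero$. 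Completing the square identifies the posterior as $\cN\!\pr{\bsT(\bSigma^{-1}\balpha + \tfrac{1}{\sigma^2}\bX_n\tp\bY_n),\,\bsT}$, whence $\E[\btheta_n\,|\,\bY_n] = \bsT(\bSigma^{-1}\balpha + \tfrac{1}{\sigma^2}\bX_n\tp\bY_n)$; multiplying by $\bx\tp$ yields the stated formula.

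The only step carrying genuine content is the conditional-independence reduction from $\mc{D}$ to $\bY_n$; everything after it is the standard Bayesian linear regression calculation. As an alternative to completing the square, one could observe that $(\btheta_n,\bY_n)$ is jointly Gaussian with $\bY_n\sim\cN(\bX_n\balpha,\bX_n\bSigma\bX_n\tp+\sigma^2\bI)$ (as in \eqref{eq:N_alpha_equiv}) and $\Cov(\btheta_n,\bY_n)=\bSigma\bX_n\tp$, apply the Gaussian conditioning formula $\E[\btheta_n\,|\,\bY_n] = \balpha + \bSigma\bX_n\tp(\bX_n\bSigma\bX_n\tp+\sigma^2\bI)^{-1}(\bY_n-\bX_n\balpha)$, and then reconcile it with the claimed expression via a Woodbury/push-through identity. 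I would prefer the log-density route, since it sidesteps exactly that matrix-identity bookkeeping.
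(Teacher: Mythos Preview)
Your proposal is correct and follows essentially the same route as the paper: the paper also writes the posterior $p(\btheta_n\,|\,\cD)\propto \pgauss(\bY_n\mid\bX_n\btheta_n,\sigma^2\bI)\,\pgauss(\btheta_n\mid\balpha,\bSigma)$ and completes the square in the log-density to read off the Gaussian posterior mean $\bmu=\bsT(\bSigma^{-1}\balpha+\sigma^{-2}\bX_n\tp\bY_n)$, then observes $\E[Y\mid\cD]=\bx\tp\bmu$. If anything, you are more explicit than the paper about the conditional-independence reduction from $\cD$ to $\bY_n$, which the paper simply asserts by writing the posterior in terms of task-$n$ data only.
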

\begin{proof}
  See~\cref{appendix:posterior-params}.
\end{proof}
Since the only unknown parameter here is the meta-mean $\balpha$, we expect that good predictors will just estimate the meta-mean and use the above formula. That is, these predictors take the form
$(\cD,\bx) \mapsto \bx\tp \bhtheta_n(\balpha(\cD,\bx))$, where
\begin{align}
  \bhtheta_n(\ba) &= \bsT\pr{\bSigma^{-1}\ba + \frac{1}{\sigma^2}\bX_n\tp \bY_n} \quad \ba \in \R^d~, \label{eq:theta-cov}
\end{align}
giving our family of  \emph{plug-in predictors}.
In fact, 
\emph{there is no loss in generality by considering only predictors of the above form.}
Indeed, given some predictor $A$ and $\bx \ne \b0$, 
we can solve $A(\cD,\bx) = \bx\tp \bhtheta_n(\balpha)$ for $\balpha$.
One solution is given by 
$\balpha(\cD,\bx) =  \bSigma \bsT^{-1} \bx c$ where $c=\tfrac{1}{\norm{\bx}^2}\pr{A(\cD,\bx)
  - \sigma^{-2}\bx\tp\bsT\bX_n\tp\bY_n}$.
%
%
%
Hence, to prove a lower bound for any regressor $A$, it will be enough to prove it for algorithms that estimate $\balpha$.
%
%

%

One special estimator of $\balpha$ is the \ac{MLE} estimator, and, 
thanks to~\eqref{eq:N_alpha_equiv}, can be obtained via
$\bhalpha\MLE = \argmax_{\ba \in \R^d} \ln \pgauss(\bY; \bPsi \ba, \bK)$,
where $\pgauss(\bx; \bmu, \bSigma) \propto e^{-\tfrac12 \|\bx - \bmu\|_{\bSigma^{-1}}^2}$ is a Guassian PDF.
Some standard calculations give us
\begin{equation}
  \label{eq:alpha-ml}
  \bhalpha\MLE = (\bPsi\tp\bK^{-1}\bPsi)^{-1}\bPsi\tp\bK^{-1} \bY~.
\end{equation}
%
%
\todoc{I removed a sentence here as I saw no point to it.}
\subsection{Weighted Biased Regularization}
\label{sec:br}
Biased regularization
is a popular transfer learning technique which commonly appears in the regularized formulations of the empirical risk minimization problems,
where one aims at minimizing the empirical risk (such as the mean squared error) while forcing the solution to stay close to some bias variable $\bb$.
Here we consider the \emph{\ac{BRLS}} formulation defined w.r.t.\ bias $\bb$ and some \ac{PSD} matrix $\bGamma$:
\begin{align*}
  &\bhtheta_n\brls
    =    
  \argmin_{\btheta \in \R^d}\cbr{  \hL_n(\btheta) + \frac{\lambda}{2} \|\btheta - \bb\|_{\bGamma}^2 }\\
  \text{where} \quad &\hL_n(\btheta) =
  \sum_{j=1}^{m_n}\pr{Y_{n,j} - \btheta\tp\bx_{n,j}}^2~.
\end{align*}
Remarkably, an estimate $\bhtheta_n\brls$ produced by \ac{BRLS} is \emph{equivalent} to estimator $\bhtheta_n(\bhalpha)$ of~\cref{eq:theta-cov} for the choice of $\bb=\bhalpha, \bGamma=\bSigma^{-1}$, and $\lambda=\sigma^2$.
Thus, \ac{BRLS} is a special member of the family chosen in the previous section. \todoc{well, of course. as that family was general. so we should remove this sentence or rewrite it..}

To see the equivalence, owing to the convenient least-squares formulation, we observe that
\begin{align*}
  \bhtheta_n\brls = \pr{\bX_n\tp\bX_n + \lambda \bGamma}^{-1} (\bX_n\tp\bY_n + \lambda\bGamma\bb)~
\end{align*}
and from here the equivalence follows by substitution. 
%
%
%
A natural question commonly arising in such formulations is how to set the bias term $\bb$.
One choice can be $\bb=\bhalpha\MLE$ and in the following we will see that it is an optimal one.

\section{Problem-Dependent Bounds}
\label{sec:instance-dependent-bounds}
%
%
%

We now present our main results, which are essentially matching lower and upper bounds.
The upper bounds concern the parameter estimator that uses the \ac{MLE} estimate of $\balpha$,
while the lower bounds apply to \emph{any} method.
We also present a more precise lower bound that applies to estimators that are built on \emph{unbiased} meta-mean estimators $\bhalpha$. As we shall see that plug-in predictors based on \ac{MLE} will exactly match this lower bound.
The general lower bounds are also quite precise: They differ from this lower bound only by a (relatively small) universal constant.
We also give a high-probability variant of the same general lower bound.
\begin{theorem}
  \label{thm:main}
  Let $\bx \in \R^d$ and consider the linear regression model~\eqref{eq:model}.
  Let $\bhalpha$ be any unbiased estimator of $\balpha$ based on $\cD$. 
  Then the transfer risk $\cL(\bx)$  of the predictor that predicts 
  $\hat Y =\bx\tp\bhtheta_n(\bhalpha)$ satisfies
  \begin{align}
    \E[\cL(\bx)]
    &\geq
      \bx\tp \bM \bx + \bx\tp\bsT\bx + \sigma^2    \label{eq:unbiased-lower-bound}\\
    \text{where} \quad \bM &= \bsT\bSigma^{-1}(\bPsi\tp\bK^{-1}\bPsi)^{-1} \bSigma^{-1} \bsT~.   \nonumber
  \end{align}
  Moreover, for all predictors 
  we have
  \begin{equation}
    \label{eq:lower-bound-all}
    \E[\cL(\bx)] \geq \frac{\bx\tp \bM \bx}{16 \sqrt{e}} + \bx\tp\bsT\bx + \sigma^2.
  \end{equation}
  Finally, for any $\delta \in (0,1)$, with probability at least $1-\delta$ for all predictors 
  we have
  \[
    \cL(\bx)
    \geq
    \frac12\log\pr{\frac{1}{4(1-\delta)}} \bx\tp \bM \bx
     + \bx\tp\bsT\bx + \sigma^2.
  \]
\end{theorem}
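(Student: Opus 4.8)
The plan is to deduce all three inequalities from a single exact risk identity that the sufficiency discussion of the previous section essentially hands us. Writing an arbitrary predictor as $\hat Y=\bx\tp\bhtheta_n(\bhalpha)$ for some estimator $\bhalpha=\bhalpha(\cD)$ of $\balpha$ (legitimate once $\bx\ne\b0$), the bias--variance split together with \cref{cor:E_cond_D} gives, for every realisation of $\cD$,
\[
  \cL(\bx) \;=\; \bigl(\bv\tp(\bhalpha-\balpha)\bigr)^2 \;+\; \bx\tp\bsT\bx+\sigma^2,\qquad \bv:=\bSigma^{-1}\bsT\bx,
\]
because $\E[Y\mid\cD]-\hat Y=\bx\tp\bsT\bSigma^{-1}(\balpha-\bhalpha)$ and $\V[Y\mid\cD]=\bx\tp\bsT\bx+\sigma^2$ (the posterior covariance of $\btheta_n$ given $\cD$ is $\bsT$, plus label noise). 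Hence the two ``irreducible'' terms appear automatically, and the whole theorem reduces to lower bounding $\E[(\bv\tp(\bhalpha-\balpha))^2]$, or its lower tail, for an arbitrary estimator $\bhalpha$ of the mean in the Gaussian experiment $\bY\sim\cN(\bPsi\balpha,\bK)$ of \cref{eq:N_alpha_equiv}; here I use that under a fixed design $\cD$ is $\bY$-measurable.

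For \cref{eq:unbiased-lower-bound} I would invoke the multivariate Cram\'er--Rao bound: the Fisher information of $\bY\sim\cN(\bPsi\balpha,\bK)$ about $\balpha$ is $\bF:=\bPsi\tp\bK^{-1}\bPsi$, so an unbiased $\bhalpha$ satisfies $\Cov_{\balpha}(\bhalpha)\succeq\bF^{-1}$ and therefore $\E[(\bv\tp(\bhalpha-\balpha))^2]=\bv\tp\Cov_{\balpha}(\bhalpha)\bv\ge\bv\tp\bF^{-1}\bv=\bx\tp\bM\bx$ by the definitions of $\bv$ and $\bM$. This step holds pointwise in $\balpha$.

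The last two inequalities cannot hold pointwise in $\balpha$ --- a predictor that ignores the data and outputs $\bx\tp\bhtheta_n(\balpha_0)$ attains the irreducible risk at $\balpha=\balpha_0$ --- so they must be read as ``for every predictor there is a meta-mean $\balpha$ for which \dots'', and I would produce that $\balpha$ by a two-point (Le~Cam / Bretagnolle--Huber) argument in the direction $\bv$. Fix $\Delta>0$ and pick $\balpha^{(\pm)}$ with $\bv\tp\balpha^{(\pm)}=\pm\Delta$ that minimise $\mathrm{KL}\bigl(\cN(\bPsi\balpha^{(+)},\bK)\,\|\,\cN(\bPsi\balpha^{(-)},\bK)\bigr)=\tfrac12\norm{\balpha^{(+)}-\balpha^{(-)}}_{\bF}^2$; Cauchy--Schwarz in the $\bF$-inner product shows the minimum over pairs at $\bv$-distance $2\Delta$ equals $2\Delta^2/(\bv\tp\bF^{-1}\bv)=2\Delta^2/(\bx\tp\bM\bx)$. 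Feeding the test $\one{\bv\tp\bhalpha\ge 0}$ into $1-\mathrm{TV}(P,Q)\ge\tfrac12 e^{-\mathrm{KL}(P\|Q)}$ produces an $\balpha\in\{\balpha^{(+)},\balpha^{(-)}\}$ with $\mathbb{P}_{\balpha}\bigl((\bv\tp(\bhalpha-\balpha))^2\ge\Delta^2\bigr)\ge\tfrac14\exp\!\bigl(-2\Delta^2/(\bx\tp\bM\bx)\bigr)$. Choosing $\Delta^2=\tfrac12\log\tfrac{1}{4(1-\delta)}\,\bx\tp\bM\bx$ makes this probability $1-\delta$, which with the risk identity yields the high-probability bound (it is vacuous, hence trivially true, once $1-\delta\ge\tfrac14$, since then $\cL(\bx)\ge\bx\tp\bsT\bx+\sigma^2$). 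Taking instead $1-\delta=\tfrac1{4\sqrt e}$, i.e.\ $\Delta^2=\tfrac14\bx\tp\bM\bx$, and applying the reverse Markov inequality $\E X\ge a\,\mathbb{P}(X\ge a)$ converts this tail statement into $\E_{\balpha}[(\bv\tp(\bhalpha-\balpha))^2]\ge\tfrac14\cdot\tfrac1{4\sqrt e}\,\bx\tp\bM\bx=\bx\tp\bM\bx/(16\sqrt e)$, i.e.\ \cref{eq:lower-bound-all}.

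The Gaussian bookkeeping (the forms of $\bK$, $\bF$ and the two KL divergences) and the constant-chasing are routine. The part I expect to require the most care is the conceptual framing of the algorithm-independent bounds: guaranteeing that the two-point adversary can be placed in $\balpha$-space rather than in $\bhtheta_n$-space --- which is precisely what the sufficiency reduction of the previous section buys us --- and choosing the optimal two-point pair (the $\bF$-metric Cauchy--Schwarz) so that the KL, and hence the constants $\tfrac12\log\tfrac1{4(1-\delta)}$ and $\tfrac1{16\sqrt e}$, come out exactly as stated.
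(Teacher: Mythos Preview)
Your proposal is correct and follows the same overall architecture as the paper: the exact risk identity $\cL(\bx)=(\bv\tp(\bhalpha-\balpha))^2+\bx\tp\bsT\bx+\sigma^2$ with $\bv=\bSigma^{-1}\bsT\bx$ (the paper states this only after taking expectations, in \cref{lem:raw_lower_bound}, but your pointwise version is equally valid and in fact needed for the high-probability claim), Cram\'er--Rao with Fisher information $\bF=\bPsi\tp\bK^{-1}\bPsi$ for the unbiased bound, and a two-point Bretagnolle--Huber argument in the Gaussian experiment $\bY\sim\cN(\bPsi\balpha,\bK)$ for the algorithm-independent bounds.

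The differences are purely organizational. The paper places its two points asymmetrically at $\balpha_\sP=\bzero$ and $\balpha_\sQ=\Delta\,\bF^{-1}\bq$ (so $\bq\tp\balpha_\sQ=\Delta\,\bx\tp\bM\bx$), applies Markov's inequality on both sides \emph{before} Bretagnolle--Huber, and then optimises $\Delta$ separately for the in-expectation and high-probability lemmas. You instead take the symmetric pair $\bv\tp\balpha^{(\pm)}=\pm\Delta$, find the KL-optimal such pair via Cauchy--Schwarz in the $\bF$-metric (the paper simply writes down the optimal direction without the optimisation), establish the tail bound first, and then recover the $1/(16\sqrt e)$ expectation bound from the tail bound by the elementary $\E X\ge a\,\mathbb P(X\ge a)$ with $a=\tfrac14\bx\tp\bM\bx$ and $\mathbb P=\tfrac1{4\sqrt e}$. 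Both routes land on the same constants; yours is slightly more economical in that the two algorithm-independent statements share a single derivation, and your explicit remark that the bounds are minimax in $\balpha$ (not pointwise) is a clarification the paper leaves implicit.
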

\begin{proof}
The main ideas of the proof are in \cref{sec:proof_sketch}, while the complete proof is given 
in \cref{appendix:lower-bound-proofs}.
\end{proof}
Note that the presented bounds are problem-dependent since they depend on a concrete task structure of the environment characterized by $(\bSigma, \sigma^2)$.
While the strength of the above bound is its generality, this generality 
makes the interpretation of the result challenging. 
We return to the interpretation of this result momentarily, after presenting results for the transfer risk 
%
%
for the plug-in method that uses 
the (unbiased) \ac{MLE} meta-mean estimator $\bhalpha\MLE$ defined in~\cref{eq:alpha-ml}. 
%
%
%
%
\begin{theorem}
  \label{thm:upper_bounds}
For the estimator $\bhtheta_n(\bhalpha\MLE)$ and for any $\bx \in \R^d$ we have
\begin{align}
\label{eq:riskid}
  \E[\cL(\bx)] = \bx\tp \bM \bx + \bx\tp\bsT\bx + \sigma^2\,.
\end{align}
Moreover for the same estimator, with probability at least $1 - \delta, \delta \in (0,1)$ we have
\[
  \cL(\bx) \leq 2\log\pr{\frac{2}{\delta}} \bx\tp \bM \bx
  + \bx\tp\bsT\bx + \sigma^2.
\]
\end{theorem}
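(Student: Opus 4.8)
The plan is to reuse the bias--variance decomposition that precedes \cref{cor:E_cond_D}. Since the prediction $\hat Y = \bx\tp\bhtheta_n(\bhalpha\MLE)$ is a function of $\cD$, that decomposition collapses to $\cL(\bx) = \pr{\E[Y \mid \cD] - \hat Y}^2 + \V[Y \mid \cD]$. By \cref{cor:E_cond_D}, $\E[Y \mid \cD] = \bx\tp\bhtheta_n(\balpha)$ with $\balpha$ the true meta-mean, and the same conjugate-Gaussian computation underlying that proposition shows that the posterior of $\btheta_n$ given $\cD$ has the data-free covariance $\bsT$, so that $\V[Y \mid \cD] = \bx\tp\bsT\bx + \sigma^2$ (the extra $\sigma^2$ coming from the fresh target noise $\eps$, which is independent of $\cD$). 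Because $\ba \mapsto \bhtheta_n(\ba)$ in \cref{eq:theta-cov} is affine, $\E[Y \mid \cD] - \hat Y = \bx\tp\bsT\bSigma^{-1}\pr{\balpha - \bhalpha\MLE}$, which reduces both claims to controlling the estimation error of $\bhalpha\MLE$.

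Next I would invoke the equivalence \cref{eq:N_alpha_equiv}, i.e.\ $\bY \sim \cN(\bPsi\balpha, \bK)$, under which \cref{eq:alpha-ml} is exactly the generalized least-squares estimator of $\balpha$. Standard facts then give $\E[\bhalpha\MLE] = \balpha$ and $\Cov(\bhalpha\MLE) = (\bPsi\tp\bK^{-1}\bPsi)^{-1}$; being a linear image of a Gaussian vector, $\bhalpha\MLE$ is itself Gaussian, so $\balpha - \bhalpha\MLE \sim \cN(\bzero, (\bPsi\tp\bK^{-1}\bPsi)^{-1})$. For \cref{eq:riskid} I take expectations over $\cD$ in the decomposition: the variance term is deterministic and contributes $\bx\tp\bsT\bx + \sigma^2$, while $\E\br{\pr{\E[Y\mid\cD]-\hat Y}^2} = \tr\!\pr{\bSigma^{-1}\bsT\bx\bx\tp\bsT\bSigma^{-1}\,\Cov(\bhalpha\MLE)} = \bx\tp\bsT\bSigma^{-1}(\bPsi\tp\bK^{-1}\bPsi)^{-1}\bSigma^{-1}\bsT\bx = \bx\tp\bM\bx$, which is the stated identity.

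For the high-probability statement, set $\bv = \bSigma^{-1}\bsT\bx$, so that $\E[Y\mid\cD]-\hat Y = \bv\tp(\balpha-\bhalpha\MLE)$ is a scalar Gaussian with mean $0$ and variance $\bv\tp(\bPsi\tp\bK^{-1}\bPsi)^{-1}\bv = \bx\tp\bM\bx$; hence $\pr{\E[Y\mid\cD]-\hat Y}^2 = (\bx\tp\bM\bx)\,Z^2$ with $Z \sim \cN(0,1)$. The elementary Gaussian tail bound $\Prob{Z^2 > t} = \Prob{|Z| > \sqrt{t}} \leq 2e^{-t/2}$ shows that $Z^2 \leq 2\log(2/\delta)$ holds with probability at least $1-\delta$; substituting this, together with $\V[Y\mid\cD] = \bx\tp\bsT\bx + \sigma^2$, into the decomposition yields the claimed upper bound.

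I do not anticipate a substantive obstacle. The only points requiring care are (i) reading the conditional moments of \cref{cor:E_cond_D} correctly --- $\balpha$ is a fixed (unknown) parameter while $\btheta_n$ carries the Gaussian prior, which is precisely what makes the posterior covariance the data-independent matrix $\bsT$, so that $\V[Y\mid\cD]$ is nonrandom --- and (ii) extracting unbiasedness and the exact covariance $(\bPsi\tp\bK^{-1}\bPsi)^{-1}$ of the GLS estimator $\bhalpha\MLE$ from \cref{eq:N_alpha_equiv}. Everything else is linear algebra plus one line of Gaussian concentration.
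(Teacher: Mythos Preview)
Your proposal is correct and follows essentially the same route as the paper: the decomposition $\cL(\bx) = \pr{\bx\tp\bsT\bSigma^{-1}(\balpha-\bhalpha)}^2 + \bx\tp\bsT\bx + \sigma^2$ (which is exactly \cref{lem:raw_lower_bound}), the covariance computation $\Cov(\bhalpha\MLE) = (\bPsi\tp\bK^{-1}\bPsi)^{-1}$, and one-dimensional Gaussian concentration for the tail. Your write-up of the high-probability step is in fact cleaner than the paper's.
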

\begin{proof}
See \cref{appendix:upper-bound-proofs}.
\end{proof}
Note that \cref{eq:riskid} is an equality for the transfer risk and it matches the lower bound available for unbiased estimators.
This result, together with our lower bound shows that {\em (i)} the predictors based on $\bhalpha\MLE$ is \emph{optimal, with matching constant} within the set of predictors that is based on unbiased estimators of $\balpha$. It also follows that {\em (ii)} apart from a constant factor of $16 \sqrt{e}$ of the transfer risk, this predictor is also optimal among \emph{all} predictors.

\subsection{Interpretation of the Results}
\label{sec:special_cases}
%
%
%
The following two corollaries specialize the lower bound of \cref{thm:main} in a way that will make the results more transparent. Note that while we give these simplified expressions for the lower bound (specifically, \cref{eq:lower-bound-all}), these expressions also remain essentially true for the upper bound for the MLE estimator, since these differ only in minor details.
The proofs of both corollaries are given in \cref{appendix:supp_statements}.

%

Both specializations are concerned with the case when 
the inputs are \emph{isotropic}, meaning that the input covariance matrix of task $i$ is $\frac{m_i}{d} \bI$.

In the first result, in addition, we
assume a \emph{spherical} task structure: $\bSigma = \tau^2 \bI$.
Thus, the coordinates of the parameter vectors $\btheta_i$ are uncorrelated and share the same variance $\tau^2$.
\begin{corollary}
  \label{prop:unrolled_lb_Sigma_cases}
  Assume the same as in case of \cref{eq:lower-bound-all}.
  In addition, let $\bSigma = \tau^2 \bI$, suppose that $\bX_i\tp \bX_i = \frac{m_i}{d} \bI$, and let $\|\bx\| = 1$.
  Then,
{\small
\begin{align*}
  \frac{\E[\cL(\bx)]  -\sigma^2}{\sigma^2} 
  \geq  
  \frac{H_{\tau^2}}{16 \sqrt{e}} \cdot \frac{d^2 \sigma^2}{n \pr{\tau^2 m_n + d \sigma^2}^2}
  +
  \frac{d \tau^2}{\tau^2 m_n + d \sigma^2}  \,,
\end{align*}
}%
where $H_z$ is a harmonic mean of a sequence $(z + \tfrac{d \sigma^2}{m_i})_{i=1}^n$.
\end{corollary}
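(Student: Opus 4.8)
The plan is to specialize each ingredient of the general lower bound~\eqref{eq:lower-bound-all} under the two structural assumptions $\bSigma = \tau^2\bI$ and $\bX_i\tp\bX_i = \frac{m_i}{d}\bI$, and to observe that every matrix appearing then becomes a scalar multiple of the identity, so that the claim reduces to bookkeeping. First I would simplify $\bsT$: with $\bSigma = \tau^2\bI$ and $\bX_n\tp\bX_n = \frac{m_n}{d}\bI$ we get $\bsT = (\tau^{-2}\bI + \sigma^{-2}\tfrac{m_n}{d}\bI)^{-1} = \frac{\tau^2 d\sigma^2}{\tau^2 m_n + d\sigma^2}\,\bI$, so that $\bx\tp\bsT\bx = \frac{\tau^2 d\sigma^2}{\tau^2 m_n + d\sigma^2}$ since $\|\bx\| = 1$; dividing by $\sigma^2$ produces the second term on the right-hand side of the claimed inequality.

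The substantive step is evaluating $\bx\tp\bM\bx$, which comes down to computing $(\bPsi\tp\bK^{-1}\bPsi)^{-1}$. Here I would use that $\bX$ is block-diagonal with blocks $\bX_i$, so that $\bK = \bX(\bI\otimes\bSigma)\bX\tp + \sigma^2\bI$ is block-diagonal with blocks $\bK_i := \tau^2\bX_i\bX_i\tp + \sigma^2\bI_{m_i}$, while $\bPsi$ is the vertical stack of the $\bX_i$; hence $\bPsi\tp\bK^{-1}\bPsi = \sum_{i=1}^n \bX_i\tp\bK_i^{-1}\bX_i$. The push-through identity $\bA\tp(\bA\bA\tp + c\bI)^{-1} = (\bA\tp\bA + c\bI)^{-1}\bA\tp$ rewrites each summand as $\tau^{-2}(\bX_i\tp\bX_i + \tfrac{\sigma^2}{\tau^2}\bI)^{-1}\bX_i\tp\bX_i$, and the isotropy assumption collapses this to $\frac{m_i}{\tau^2 m_i + d\sigma^2}\bI$. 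Summing, $\bPsi\tp\bK^{-1}\bPsi = \big(\sum_{i=1}^n \frac{m_i}{\tau^2 m_i + d\sigma^2}\big)\bI$.

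It then remains to assemble $\bM = \bsT\bSigma^{-1}(\bPsi\tp\bK^{-1}\bPsi)^{-1}\bSigma^{-1}\bsT$: every factor is a scalar times $\bI$, so $\bM$ is too, and with $\|\bx\|=1$ and $\bSigma^{-1} = \tau^{-2}\bI$ one obtains $\bx\tp\bM\bx = \frac{d^2\sigma^4}{(\tau^2 m_n + d\sigma^2)^2}\big(\sum_{i=1}^n \frac{m_i}{\tau^2 m_i + d\sigma^2}\big)^{-1}$. Finally I would recognize that $\frac{m_i}{\tau^2 m_i + d\sigma^2} = \big(\tau^2 + \tfrac{d\sigma^2}{m_i}\big)^{-1}$, so that $\sum_{i=1}^n \frac{m_i}{\tau^2 m_i + d\sigma^2} = n/H_{\tau^2}$ by the definition of the harmonic mean of the sequence $(\tau^2 + \tfrac{d\sigma^2}{m_i})_{i=1}^n$; substituting this, dividing through by $\sigma^2$, and keeping the factor $\frac{1}{16\sqrt{e}}$ on the $\bx\tp\bM\bx$ term yields exactly the stated bound. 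The only part calling for genuine care is the block-diagonal reduction of $\bPsi\tp\bK^{-1}\bPsi$ combined with the push-through identity; the rest is substitution.
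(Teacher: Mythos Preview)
Your proposal is correct and follows essentially the same computation as the paper: both reduce $\bPsi\tp\bK^{-1}\bPsi$ to $\sum_i \bX_i\tp(\bX_i\bSigma\bX_i\tp+\sigma^2\bI)^{-1}\bX_i$ via the block structure of $\bK$, apply the push-through identity, and then assemble $\bM$ and $\bsT$. The only organizational difference is that the paper factors the argument through two auxiliary results (\cref{prop:unrolling_lb} and \cref{lem:lambda_Sigma_Tau}) which compute the eigenvalues of $\bM$ and $\bsT$ for \emph{general} $\bSigma$ via spectral decomposition, so that the same machinery also serves the low-rank corollary; you instead specialize to $\bSigma=\tau^2\bI$ from the outset, whereupon every matrix is already a scalar multiple of $\bI$ and no eigendecomposition is needed. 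For this corollary alone your direct route is cleaner; the paper's detour pays off only because it is reused for \cref{cor:unrolled_lb_Sigma_cases2}.
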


%
%
%

%
%
In the above bound
the first term vanishes as more tasks are added ($n$ growing).
On the other hand, to decrease the second term, $m_n$ needs to increase.
In particular, as $n\to\infty$, we get
\begin{equation}
  \label{eq:meta_lb_asymp}
  \frac{\E[\cL(\bx)]  -\sigma^2}{\sigma^2} \geq  
  \left(\frac{m_n}{d} + \frac{\sigma^2}{\tau^2}\right)^{-1}
\end{equation}
where $\frac{m_n}{d} + \frac{\sigma^2}{\tau^2}$ can be interpreted as an \emph{effective sample size}.
Thus, while having infinitely many previous tasks have the potential to reduce the loss, the size of this effect is fixed and is related to the noise variance ratios. If $\tau^2\to 0$, having infinitely many tasks will allow perfect prediction, but for any $\tau^2>0$, there is a limit on how much the data of previous tasks can help. 
%
Finally, for the case $n=1$ and $\tau^2 = 0$ we recover the standard lower bound for linear setting $\E[\cL(\bx)]  -\sigma^2 = \Omega(d \sigma^2 / m_1)$.

Our next result is concerned with ``representation learning'', which corresponds to the case when $\bSigma$ is a low rank \ac{PSD} matrix.
\begin{corollary}
\label{cor:unrolled_lb_Sigma_cases2}
Let the inputs be isotropic as before and $\norm{\bx}=1$.
Moreover, let $\bSigma$ be a \ac{PSD} matrix of rank $s \leq d$ with eigenvalues $\lambda_1 \geq \ldots \geq \lambda_s > 0$,%
\footnote{When $s<d$, we replace $\bSigma^{-1}$ with its pseudo-inverse $\bSigma^{\dagger}$.}
and suppose that
$\|\bx\|_{\bP_s\tp \bP_s}^2 = s/d$
where $\bP_s = [\bu_1, \ldots, \bu_s]\tp$ and $(\bu_j)_{j=1}^s$ are unit length eigenvectors of $\bSigma$.
Then,
%
{\small
  \begin{align*}
  \frac{\E[\cL(\bx)]  -\sigma^2}{\sigma^2}
  \geq  
  \frac{H_{\lambda_s}}{16 \sqrt{e}} \cdot
  \frac{s d \sigma^2}{n \pr{\lambda_1 m_n + d \sigma^2}^2} +
  \frac{s \lambda_s}{\lambda_s m_n + d \sigma^2}~.
  \end{align*}
}
%
%
\end{corollary}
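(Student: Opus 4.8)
The plan is to start from the general lower bound \eqref{eq:lower-bound-all} of \cref{thm:main}, so that it suffices to lower bound the two quantities $\bx\tp\bM\bx$ and $\bx\tp\bsT\bx$ under the stated assumptions; this is a purely linear-algebraic computation, and it is in fact the same computation underlying \cref{prop:unrolled_lb_Sigma_cases}, now carried out in the $s$-dimensional range of $\bSigma$. I would diagonalize everything in the eigenbasis of $\bSigma$: write $\bSigma = \bP_s\tp\bLambda\bP_s$ with $\bLambda = \mathrm{diag}(\lambda_1,\dots,\lambda_s)$ and $\bSigma^{\dagger} = \bP_s\tp\bLambda^{-1}\bP_s$, and use isotropy $\bX_n\tp\bX_n = \tfrac{m_n}{d}\bI$ to get $\bsT = (\bSigma^{\dagger} + \tfrac{m_n}{d\sigma^2}\bI)^{-1}$, which acts as $\tfrac{\lambda_j d\sigma^2}{d\sigma^2 + \lambda_j m_n}$ along $\bu_j$ and as $\tfrac{d\sigma^2}{m_n}$ on the orthogonal complement of $\mathrm{span}(\bu_1,\dots,\bu_s)$. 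Since $\lambda \mapsto \tfrac{\lambda d\sigma^2}{d\sigma^2 + \lambda m_n}$ is increasing, dropping the (nonnegative) complement contribution and using $\norm{\bx}_{\bP_s\tp\bP_s}^2 = s/d$ immediately yields $\bx\tp\bsT\bx \ge \tfrac{s\lambda_s\sigma^2}{\lambda_s m_n + d\sigma^2}$, which after dividing by $\sigma^2$ is the second term of the claim.

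For the first term I would exploit that $\bK$ is block diagonal with blocks $\bK_i = \bX_i\bSigma\bX_i\tp + \sigma^2\bI$, so that $\bA := \bPsi\tp\bK^{-1}\bPsi = \sum_{i=1}^n \bX_i\tp\bK_i^{-1}\bX_i$. Applying the Woodbury identity to $\bK_i^{-1}$ and then using isotropy ($\bX_i\tp\bX_i = \tfrac{m_i}{d}\bI$, hence $(\bX_i\bP_s\tp)\tp(\bX_i\bP_s\tp) = \tfrac{m_i}{d}\bI_s$), a short calculation collapses $\bP_s(\bX_i\tp\bK_i^{-1}\bX_i)\bP_s\tp$ to the diagonal matrix with entries $\tfrac{m_i}{d\sigma^2 + \lambda_j m_i} = (\lambda_j + d\sigma^2/m_i)^{-1}$; summing over $i$ gives $\bP_s\bA\bP_s\tp = \mathrm{diag}(n/H_{\lambda_j})_{j=1}^s$, where $H_{\lambda_j}$ is the harmonic mean of $(\lambda_j + d\sigma^2/m_i)_{i=1}^n$. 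The same diagonalization shows $\bSigma^{\dagger}\bsT = \bsT\bSigma^{\dagger} = \bP_s\tp\bD\bP_s$ with $\bD = \mathrm{diag}\!\big(\tfrac{d\sigma^2}{d\sigma^2 + \lambda_j m_n}\big)_{j=1}^s$, so $\bx\tp\bM\bx = (\bD\bP_s\bx)\tp(\bP_s\bA^{-1}\bP_s\tp)(\bD\bP_s\bx)$.

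Next I would invoke the Schur-complement inequality $\bP_s\bA^{-1}\bP_s\tp \succeq (\bP_s\bA\bP_s\tp)^{-1}$ (the block of the inverse dominates the inverse of the block), valid because $\bA$ is positive definite: under isotropy $\bPsi\tp\bPsi = \tfrac{M}{d}\bI \succ 0$, so $\bPsi$ has full column rank. This reduces $\bx\tp\bM\bx$ to $\sum_{j=1}^s \tfrac{H_{\lambda_j}}{n}\cdot\tfrac{d^2\sigma^4}{(d\sigma^2 + \lambda_j m_n)^2}(\bP_s\bx)_j^2$; bounding each coefficient from below using $H_{\lambda_j} \ge H_{\lambda_s}$ (harmonic mean increasing in $\lambda_j$, and $\lambda_j \ge \lambda_s$) together with $(d\sigma^2 + \lambda_j m_n)^2 \le (d\sigma^2 + \lambda_1 m_n)^2$ (since $\lambda_j \le \lambda_1$), and then using $\sum_j (\bP_s\bx)_j^2 = \norm{\bx}_{\bP_s\tp\bP_s}^2 = s/d$, gives $\bx\tp\bM\bx \ge \tfrac{sd\sigma^4 H_{\lambda_s}}{n(\lambda_1 m_n + d\sigma^2)^2}$. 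Dividing the sum of the two bounds by $\sigma^2$ and inserting the $1/(16\sqrt{e})$ factor from \eqref{eq:lower-bound-all} produces exactly the claimed inequality, and the case $s=d$, $\bSigma = \tau^2\bI$ recovers \cref{prop:unrolled_lb_Sigma_cases}. The only genuinely non-mechanical point — the main obstacle — is the handling of the rank deficiency of $\bSigma$: one must work with the pseudo-inverse throughout and, crucially, control $\bP_s\bA^{-1}\bP_s\tp$ (not $(\bP_s\bA\bP_s\tp)^{-1}$) via the Schur-complement bound, and then check that the two competing monotonicities in $\lambda_j$, through $H_{\lambda_j}$ and through $(d\sigma^2+\lambda_j m_n)^{-2}$, are resolved consistently (here they are, using $\lambda_j \ge \lambda_s$ for the numerator and $\lambda_j \le \lambda_1$ for the denominator).
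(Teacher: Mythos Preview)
Your proof is correct and reaches the same conclusion as the paper, but the handling of the rank deficiency is genuinely different. The paper does not work directly with the pseudo-inverse: instead it perturbs $\bSigma$ to a full-rank $\bSigma_\ve$ (setting the $d-s$ null eigenvalues to $\ve>0$), invokes \cref{lem:lambda_Sigma_Tau} to write $\bx\tp\bM_\ve\bx$ and $\bx\tp\bsT_\ve\bx$ as eigenvalue sums, takes the limit $\ve\to 0$, and then discards the (nonnegative) complement contribution. You instead keep $\bSigma^\dagger$ throughout and control $\bP_s\bA^{-1}\bP_s\tp$ via the Schur-complement inequality $\bP_s\bA^{-1}\bP_s\tp \succeq (\bP_s\bA\bP_s\tp)^{-1}$. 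The remaining steps (diagonalization, the two monotonicities $H_{\lambda_j}\ge H_{\lambda_s}$ and $(d\sigma^2+\lambda_j m_n)^{-2}\ge (d\sigma^2+\lambda_1 m_n)^{-2}$, and the use of $\|\bx\|_{\bP_s\tp\bP_s}^2=s/d$) are essentially identical in both arguments.

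One remark: under the isotropy assumption your Schur-complement step is in fact an equality. Since $\bX_i\tp\bX_i=\tfrac{m_i}{d}\bI$, the matrix $\bA=\sum_i(\bSigma\tfrac{m_i}{d}+\sigma^2\bI)^{-1}\tfrac{m_i}{d}$ is a polynomial in $\bSigma$ and hence diagonal in the \emph{full} eigenbasis of $\bSigma$; the off-diagonal block $\bA_{12}$ vanishes and $\bP_s\bA^{-1}\bP_s\tp=(\bP_s\bA\bP_s\tp)^{-1}$ exactly. So your argument is slightly more cautious than necessary here, but this is harmless and has the advantage of being robust to settings where $\bA$ and $\bSigma$ do not commute.
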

%
%
Note that the first term on the right-hand side of the last display 
scales with $sd/n$, where $sd$ is the number of parameter in a matrix that would give the low-dimensional representation and the second term scales with $s/m_n$ for $m_n \gg d \sigma^2/\lambda_s$. Somewhat surprisingly (given that here $\bSigma$ is known),
these essentially match the upper bounds due to 
\citet{du2020few,tripuraneni2020provable}, implying that their results are unimprovable.
%

\if0
\paragraph{Optimality of weighted biased regularization.}
In our second contribution we show that 
{\em (i)} the maximum likelihood estimator $\bhalpha\MLE$ of $\balpha$ can be efficiently computed and {\em (ii)}
the predictor that predicts $Y$ using 
 $\hat Y = \bx\tp \bhtheta_n$ where
$\bhtheta_n$ is the minimizer of the biased, 
 $\bSigma$-weighted
regularized least-squares problem
\[
  \min_{\btheta \in \R^d} \sum_{j=1}^{m_n} (Y_{n,j} - \btheta\tp \bx_{n,j})^2 + \frac{\sigma^2}{2} \|\btheta - \bhalpha\MLE\|_{\bSigma^{-1}}^2,
\]
when we set $\bhalpha = \bhalpha\MLE$ is \textbf{near-optimal} in the sense that 
its transfer risk matches the lower bound that we prove up to a universal constant factor.
Note that this result is established without making specific assumptions on the data beyond those that are mentioned in the setup. \todoc{or..?}

As mentioned earlier in the introduction, which the reader is referred back to, for the case of unknown covariance structure, we adopt the \textbf{\ac{EM} algorithm}, which is empirically shown to be efficient and competitive with its alternatives.
\fi
%
%
%

\subsection{Proof Sketches}
\label{sec:proof_sketch}
%
Our lower and upper bounds on the risk are based on an identity that holds for the transfer risk of plug-in methods. The identity is essentially a bias-variance decomposition.
\begin{lemma}
  \label{lem:raw_lower_bound}
  For $\bhtheta_n(\bhalpha)$ defined in~\cref{eq:theta-cov}, any task mean estimator $\bhalpha$, and any $\bx \in \R^d$ we have
  $
    \E[\mc{L}(\bx)] =
    \E\br{
      \pr{\bx\tp\bsT\bSigma^{-1}(\balpha - \bhalpha)}^2
    }
    + \bx\tp\bsT\bx + \sigma^2.
  $
\end{lemma}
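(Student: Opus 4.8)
The plan is to derive the claimed identity directly from the bias--variance decomposition already recorded in Section~3. Since the plug-in predictor $A(\cD,\bx)=\bx\tp\bhtheta_n(\bhalpha)$ is $\cD$-measurable, that decomposition collapses to the pointwise identity $\cL(\bx) = \big(\E[Y\mid\cD] - \bx\tp\bhtheta_n(\bhalpha)\big)^2 + \V[Y\mid\cD]$. Taking expectation over $\cD$ then reduces the lemma to two separate computations: identifying the squared-bias term $\E\big[(\E[Y\mid\cD]-\bx\tp\bhtheta_n(\bhalpha))^2\big]$ and identifying $\V[Y\mid\cD]$.

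For the bias term I would invoke \cref{cor:E_cond_D}, which states precisely that $\E[Y\mid\cD]=\bx\tp\bsT\pr{\bSigma^{-1}\balpha+\tfrac{1}{\sigma^2}\bX_n\tp\bY_n}=\bx\tp\bhtheta_n(\balpha)$, i.e.\ the posterior mean equals the plug-in predictor evaluated at the \emph{true} meta-mean $\balpha$. Subtracting the definition~\eqref{eq:theta-cov} of $\bhtheta_n(\bhalpha)$, the data-dependent term $\tfrac{1}{\sigma^2}\bX_n\tp\bY_n$ cancels, leaving $\bhtheta_n(\balpha)-\bhtheta_n(\bhalpha)=\bsT\bSigma^{-1}(\balpha-\bhalpha)$. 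Hence $\big(\E[Y\mid\cD]-\bx\tp\bhtheta_n(\bhalpha)\big)^2=\big(\bx\tp\bsT\bSigma^{-1}(\balpha-\bhalpha)\big)^2$, which is exactly the first term in the claim.

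For the variance term I would write $Y=\btheta_n\tp\bx+\eps$ with $\eps\sim\cN(0,\sigma^2)$ independent of $\cD$ and of $\btheta_n$, so that $\V[Y\mid\cD]=\V[\btheta_n\tp\bx\mid\cD]+\sigma^2$. Because the tasks are sampled independently, $\btheta_n$ is conditionally independent of $(D_i)_{i\ne n}$ given $D_n$, so the posterior law of $\btheta_n$ given $\cD$ coincides with its posterior given $D_n$; the Gaussian-conjugacy computation underlying \cref{cor:E_cond_D} (done in \cref{appendix:posterior-params}) shows this posterior is Gaussian with covariance $\bsT=\pr{\bSigma^{-1}+\tfrac{1}{\sigma^2}\bX_n\tp\bX_n}^{-1}$. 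Therefore $\V[\btheta_n\tp\bx\mid\cD]=\bx\tp\bsT\bx$ and $\V[Y\mid\cD]=\bx\tp\bsT\bx+\sigma^2$; note that in the fixed-design model this quantity depends only on the (non-random) design, so its expectation over $\cD$ is itself.

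Combining the two pieces gives $\E[\cL(\bx)]=\E\big[(\bx\tp\bsT\bSigma^{-1}(\balpha-\bhalpha))^2\big]+\bx\tp\bsT\bx+\sigma^2$, as claimed. I do not expect a real obstacle here: the argument is essentially substitution into~\eqref{eq:theta-cov} together with the bias--variance split. The only point requiring a little care is the identification of the posterior covariance of $\btheta_n$ given the \emph{entire} dataset $\cD$ rather than given $D_n$ alone, which rests on the inter-task independence built into the model~\eqref{eq:model}; once that is noted, the rest is routine.
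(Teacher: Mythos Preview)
Your proposal is correct and follows essentially the same route as the paper's proof: apply the bias--variance decomposition to $\cL(\bx)$, invoke \cref{cor:E_cond_D} for $\E[Y\mid\cD]$ and $\V[Y\mid\cD]$, and substitute~\eqref{eq:theta-cov} so that the $\tfrac{1}{\sigma^2}\bX_n\tp\bY_n$ terms cancel. Your extra remarks---that $\bhtheta_n(\bhalpha)$ is $\cD$-measurable, that the posterior of $\btheta_n$ given $\cD$ coincides with that given $D_n$ by inter-task independence, and that $\V[Y\mid\cD]=\bx\tp\bsT\bx+\sigma^2$ is non-random in the fixed-design model---are all valid and simply spell out details the paper leaves implicit.
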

For the proof of this lemma we need the following proposition whose proof is given
in~\cref{appendix:posterior-params}:
\begin{proposition}
\label{cor:E_cond_D}
  Let $Y = \btheta_n\tp \bx + \eps$ for $\eps \sim \cN(0, \sigma^2)$ and some $\bx \in \R^d$.
  Then, $\E[Y \,|\, \cD] = \bx\tp \bsT \pr{\bSigma^{-1}\balpha + \frac{1}{\sigma^2}\bX_n\tp\bY_n}$
  and $\V[Y \,|\, \cD] = \bx\tp \bsT \bx + \sigma^2$.
\end{proposition}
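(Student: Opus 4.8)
The plan is to reduce the claim to a single-task Bayesian linear-regression computation and then strip off the fresh label noise with the laws of total expectation and variance. First I would observe that under~\eqref{eq:model} the blocks $(\btheta_i, D_i)_{i \ne n}$ are mutually independent and independent of $(\btheta_n, D_n, \eps)$, and the design $\bX_n$ is deterministic; hence conditioning on $\cD$ is, as far as $\btheta_n$ and $Y$ are concerned, the same as conditioning on $\bY_n$, and the test noise $\eps$ remains $\cN(0,\sigma^2)$ and independent of $\cD$. So it suffices to identify the conditional law of $\btheta_n$ given $\bY_n$, and then combine it with the definition $Y = \bx\tp\btheta_n + \eps$; the variance identity is exactly what feeds the bias--variance decomposition used in~\cref{lem:raw_lower_bound}.

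Second, I would apply Gaussian conjugacy. Here $\btheta_n \sim \cN(\balpha, \bSigma)$ and $\bY_n \mid \btheta_n \sim \cN(\bX_n\btheta_n, \sigma^2\bI)$, so multiplying the two densities and collecting the terms in $\btheta_n$ in the exponent gives, up to an additive constant, the quadratic form $\btheta_n\tp(\bSigma^{-1} + \sigma^{-2}\bX_n\tp\bX_n)\btheta_n - 2\,\btheta_n\tp(\bSigma^{-1}\balpha + \sigma^{-2}\bX_n\tp\bY_n)$; completing the square identifies $\btheta_n \mid \bY_n$ as Gaussian with covariance $\bsT = (\bSigma^{-1} + \sigma^{-2}\bX_n\tp\bX_n)^{-1}$ and mean $\bsT(\bSigma^{-1}\balpha + \sigma^{-2}\bX_n\tp\bY_n)$. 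Equivalently, one may quote the standard posterior formula for Bayesian linear regression, or form the joint Gaussian vector $(\btheta_n, \bY_n)$ and apply the Gaussian conditioning formula; when $\bSigma$ is singular one restricts $\btheta_n$ to $\Range(\bSigma)$ and replaces $\bSigma^{-1}$ by the pseudo-inverse, as in the footnote.

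Finally I would peel off $\eps$ by conditioning additionally on $\btheta_n$: since $\E[Y \mid \cD, \btheta_n] = \bx\tp\btheta_n$ and $\V[Y \mid \cD, \btheta_n] = \sigma^2$, the tower rule yields $\E[Y \mid \cD] = \bx\tp\E[\btheta_n \mid \cD] = \bx\tp\bsT(\bSigma^{-1}\balpha + \sigma^{-2}\bX_n\tp\bY_n)$, and the law of total variance yields $\V[Y \mid \cD] = \E[\sigma^2 \mid \cD] + \V[\bx\tp\btheta_n \mid \cD] = \sigma^2 + \bx\tp\bsT\bx$, which are exactly the two claimed identities. The only mildly delicate points are the bookkeeping in the completing-the-square step and the degenerate case where $\bSigma$ is singular; everything else is routine.
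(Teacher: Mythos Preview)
Your proposal is correct and follows essentially the same route as the paper: the paper also computes the posterior $p(\btheta_n\mid\cD)$ by expanding the product of the Gaussian prior and likelihood and completing the square (their Proposition~\ref{prop:posterior_ll}), obtaining mean $\bsT(\bSigma^{-1}\balpha+\sigma^{-2}\bX_n\tp\bY_n)$ and covariance $\bsT$, and then reads off $\E[Y\mid\cD]$ and $\V[Y\mid\cD]$ from $Y=\bx\tp\btheta_n+\eps$. Your treatment is slightly more explicit about why conditioning on $\cD$ reduces to conditioning on $\bY_n$ and about the tower/total-variance step, but the argument is the same.
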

\begin{proof}[Proof of~\cref{lem:raw_lower_bound}]
Using the law of total expectation and that for a r.v.\ $\xi$ we have $\E[\xi^2] = \E[\xi]^2 + \V[\xi]$,
\begin{align*}
  \mc{L}(\bx)
  &=
    \E\br{(Y - \bhtheta_n(\bhalpha)\tp \bx)^2\mid \cD}\\
  &=
    \E\br{
    \pr{\E[Y \,|\, \cD] - \bhtheta_n(\bhalpha)\tp \bx}^2
    +
    \V[Y \,|\, \cD]
    \mid \cD}\\
  &=
    \E\br{
    \pr{\bx\tp \bsT \bSigma^{-1} \pr{\balpha - \bhalpha}}^2
    \mid \cD}
    + \bx\tp \bsT \bx + \sigma^2\,,
\end{align*}
where identities for $\E[Y \,|\, \cD]$ and $\V[Y \,|\, \cD]$ come from \cref{cor:E_cond_D} and identity for $\bhtheta_n(\bhalpha)$ is due to~\eqref{eq:theta-cov}.
\end{proof}
Thus, to establish universal lower bounds we need to lower bound
$
\E\br{\pr{\bx\tp\bsT\bSigma^{-1}(\balpha - \bhalpha)}^2}
$
for any choice of estimator $\bhalpha$, which in combination with \cref{lem:raw_lower_bound} will prove \cref{thm:main}.
Here, relying on the Cram\'er-Rao inequality (\cref{thm:app:cramer-rao}),
 we only prove a lower bound for unbiased estimators, while the general case, whose proof uses Le Cam's method, is left to \cref{sec:app:general_lb}.
%
%
%
\begin{lemma}
  For any unbiased estimator $\bhalpha$ of $\balpha$ in~\cref{eq:N_alpha_equiv} we have
  $
    \E\left[\left(\bx\tp\bsT\bSigma^{-1}
        (\balpha - \bhalpha)^2\right)\right]
    \geq \bx\tp \bM \bx.
  $
\end{lemma}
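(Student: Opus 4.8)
The plan is to exploit the equivalence in~\eqref{eq:N_alpha_equiv}, which says that $\bY$ follows the Gaussian location model $\cN(\bPsi\balpha, \bK)$ with \emph{known} covariance $\bK$. Estimating $\balpha$ is thus a textbook parametric estimation problem, and the quantity to be bounded is the second moment of a fixed \emph{linear functional} of the estimation error. So the natural tool is the Cram\'er--Rao inequality (\cref{thm:app:cramer-rao}) applied either to the vector estimator $\bhalpha$ or directly to a scalar functional of it.

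Concretely, I would first rewrite the target. Since $\bsT$ and $\bSigma^{-1}$ are symmetric, set $\bc := \bSigma^{-1}\bsT\bx$, so that $\bx\tp\bsT\bSigma^{-1}(\balpha - \bhalpha) = \bc\tp(\balpha - \bhalpha)$. Unbiasedness of $\bhalpha$ gives $\E[\bc\tp\bhalpha] = \bc\tp\balpha$, hence $\E[(\bc\tp(\balpha - \bhalpha))^2] = \V[\bc\tp\bhalpha] = \bc\tp\Cov(\bhalpha)\bc$. Next I would compute the Fisher information of $\cN(\bPsi\balpha, \bK)$ with respect to $\balpha$: differentiating the log-likelihood $-\tfrac12(\bY - \bPsi\balpha)\tp\bK^{-1}(\bY - \bPsi\balpha)$ twice yields the constant Hessian $-\bPsi\tp\bK^{-1}\bPsi$, so $I(\balpha) = \bPsi\tp\bK^{-1}\bPsi$. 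The Cram\'er--Rao bound then gives $\Cov(\bhalpha) \succeq (\bPsi\tp\bK^{-1}\bPsi)^{-1}$, and sandwiching with $\bc$ yields $\V[\bc\tp\bhalpha] \ge \bc\tp(\bPsi\tp\bK^{-1}\bPsi)^{-1}\bc$. Substituting $\bc = \bSigma^{-1}\bsT\bx$ and using symmetry, $\bc\tp(\bPsi\tp\bK^{-1}\bPsi)^{-1}\bc = \bx\tp\bsT\bSigma^{-1}(\bPsi\tp\bK^{-1}\bPsi)^{-1}\bSigma^{-1}\bsT\bx = \bx\tp\bM\bx$, which is the claim.

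The main—though essentially routine—obstacle is the regularity side of Cram\'er--Rao: one must check the usual smoothness and interchange-of-derivative conditions (immediate for a Gaussian location family) and, more substantively, that $\bPsi\tp\bK^{-1}\bPsi$ is invertible. The latter holds precisely when $\bPsi$ has full column rank; otherwise $\balpha$ is not identifiable and the statement should be read with the pseudo-inverse $(\bPsi\tp\bK^{-1}\bPsi)^{\dagger}$, which is also how $\bM$ should then be interpreted. Stating \cref{thm:app:cramer-rao} in a form that covers the rank-deficient case (e.g. via the Moore--Penrose inverse of the information matrix) is the one place where some care is needed before the computation above goes through verbatim.
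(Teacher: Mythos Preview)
Your proposal is correct and follows essentially the same route as the paper: compute the Fisher information of the Gaussian location model $\bY\sim\cN(\bPsi\balpha,\bK)$, apply the Cram\'er--Rao inequality to get $\Cov(\bhalpha)\succeq(\bPsi\tp\bK^{-1}\bPsi)^{-1}$, and sandwich with $\bSigma^{-1}\bsT\bx$. The only cosmetic differences are that the paper computes the Fisher information via the outer product of the score rather than the Hessian, and does not introduce your shorthand $\bc$; your additional remarks on regularity and the rank-deficient case are more careful than what the paper records.
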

\begin{proof}
Recall that according to the equivalence~\eqref{eq:N_alpha_equiv}, $\bY \sim \cN(\bPsi \balpha, \bK)$ and the unknown parameter is $\balpha$.
To compute the Fisher information matrix we first observe that
$\nabla_{\balpha} \ln \pgauss\pr{\bY ; \bPsi \balpha, \bK}
    = \bPsi\tp\bK^{-1}(\bY - \b{\Psi\alpha})
$
and
\begin{align*}
  \bF &=
  \E\br{
  \nabla_{\balpha} \ln \pgauss\pr{\bY ; \bPsi \balpha, \bK}
  \nabla_{\balpha} \ln \pgauss\pr{\bY ; \bPsi \balpha, \bK}\tp
  }\\
  &= \bPsi\tp\bK^{-1}\E\br{(\bY - \bPsi\balpha) (\bY - \bPsi\balpha)\tp} \bK^{-1} \bPsi\\
  &= \bPsi\tp\bK^{-1}\b{\Psi}.
\end{align*}
Thus, by the Cram\'er-Rao inequality we have
$
\E\br{(\balpha - \bhalpha) (\balpha - \bhalpha)\tp} \succeq (\bPsi\tp\bK^{-1}\bPsi)^{-1}~.
$
Finally, left-multiplying by $\bx\tp\bsT\bSigma^{-1}$ and right-multiplying the above by $\bSigma^{-1} \bsT \bx$ gives us the statement.
\end{proof}

\section{Learning with Unknown Task Structure}
\label{sec:EM}
%
%
So far we have assumed that parameters $(\sigma^2, \bSigma)$
characterizing the structure of environment are known, which limits the applicability of the predictor (though does not limit the lower bound).
%
Staying within our framework, a natural idea is to 
estimate all the environment parameters $\cE  = (\balpha, \sigma^2, \bSigma)$ by
maximizing the data marginal log-likelihood
\begin{align*}
  J(\cD, \cE') = \ln \int_{\R^{nd}} p(\cD \,|\, \bvartheta) \diff p(\bvartheta \,|\, \cE')
\end{align*}
over $\cE'$, where $p(\cD, \bTheta, \cE)$ stands for the joint distribution in the model~\eqref{eq:model}.
The above problem is non-convex.
As such, we propose to use  \ac{EM} procedure~\citep{dempster1977maximum}, which 
is known to be a reasonable algorithm for similar settings.\footnote{While the marginal distribution is available in analytic form by~\cref{eq:N_alpha_equiv}, we focus on \ac{EM} because,  in preliminary experiments, direct optimization proved to be numerically unstable.}
\ac{EM} can be derived as a procedure that maximizes a lower bound on $J(\cD, \cE')$: \todoc{if pressed on space this textbook explanation can be removed.}
Jensen's inequality gives us that for any probability measure $q$ on $\R^{nd}$,
$
  J(\cD, \cE')
  \geq \int \ln\pr{\frac{p(\bvartheta, \cD \,|\, \cE')}{q(\bvartheta)}} \diff q(\bvartheta)
$.
This is then maximized in $\cE'$ and $q$ in an alternating fashion:
Letting $\wh{\cE}_t$ to be a parameter estimate at step $t$, we maximize the lower bound in $q$ for a fixed $\cE'=\wh{\cE}_t$, and then obtain $\wh{\cE}_{t+1}$ by maximizing the lower bound in $\cE'$ for a fixed previously obtained solution in $q$.
Maximization in $q$ gives us $q(\bvartheta) = p(\bvartheta \,|\, \cD, \wh{\cE}_t)$, while maximization in $\cE'$ yields
\begin{align}
  \label{eq:EM_argmax}
  \wh{\cE}_{t+1} \in \argmax_{\cE'} \int \ln\pr{p(\bvartheta, \cD \,|\, \cE')} \diff p(\bvartheta \,|\, \cD, \wh{\cE}_t)~.
\end{align}
After some calculations (cf. \cref{appendix:m-step}), this gives \cref{alg:EM}.
During the E-step (lines 4-5), \todoc{oops, references do not resolve: \ref{alg:line:E_1} and \ref{alg:line:E_2}}
the
algorithm computes the parameters of the posterior distribution
$\cN(\btheta_i \,|\, \bhmu_{t,i}, \bshT_{t,i})$ relying on $\wh{\cE}_t$,
and during the M-step (lines 7--9)
it estimates $\wh{\cE}_{t+1}$ based on $(\bhmu_{t,i}, \bshT_{t,i})$.
We propose to detect convergence (not shown) by checking the relative difference between successive
parameter values.  

\begin{algorithm}
  \label{alg:EM}
  \caption{\ac{EM} procedure to estimate $(\balpha, \sigma^2, \bSigma)$}
  \begin{algorithmic}[1]
    \REQUIRE{Initial parameter estimates
    $\wh{\cE}_1 = (\bhalpha_1, \wh{\sigma}_1^2, \bhSigma_1)$}
    \ENSURE{Final parameter estimates
    $\wh{\cE}_t = (\bhalpha_t, \wh{\sigma}_t^2, \bhSigma_t)$}
    \STATE $\bshT_{1,i} \gets
    \bzero, \ \bhmu_{1,i} \gets \bzero$ \quad $i \in [n]$
    \REPEAT
    \FOR[\textcolor{blue}{\texttt{E-step}}]{$i=1,\ldots,n$}
    \STATE $\bshT_{t,i} \gets
    \pr{\bhSigma_t^{-1}
    + \wh{\sigma}_t^{-2}\bX_i\tp\bX_i}^{-1}$ 
    																						\label{alg:line:E_1}
    \STATE $\bhmu_{t,i}
    \gets \bshT_{t,i}\pr{\bhSigma_t^{-1}\bhalpha_t
    + \wh{\sigma}_t^{-2}\bX_i\tp\bY_i}$ 
																						    \label{alg:line:E_2}
    \ENDFOR
    \STATE $\bhalpha_t
    \gets \frac1n \sum_{i=1}^n\bhmu_{t,i}$
    \COMMENT{\textcolor{blue}{\texttt{M-step}}} \label{alg:line:M_start}
    \STATE $\bhSigma_t \gets
    \frac1n \sum_{i=1}^n \pr{\bshT_{t,i}
    + (\bhmu_{t,i}-\bhalpha_t)(\bhmu_{t,i}-\bhalpha_t)\tp}$
    \STATE $\wh{\sigma}^2_t \gets
    \frac1n \sum_{i=1}^n \frac{1}{m_i} \pr{\hL_i(\bhmu_{t,i})
    + \tr\pr{\bX_i \bshT_{t,i} \bX_i\tp}}$ \label{alg:line:M_end}
    \STATE $t \gets t + 1$
    \UNTIL{Convergence (see discussion)}
  \end{algorithmic}
  \label{alg:EM}
\end{algorithm}

\section{Experiments}
\label{sec:exps}
\begin{figure*}[t]
\centering
\includegraphics[width=0.45\linewidth]{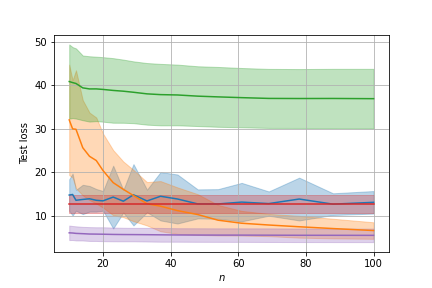}
\includegraphics[width=0.45\linewidth]{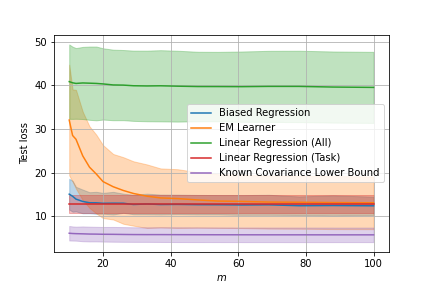}
\caption{Test errors on Fourier synthetic experiment with
changing number of tasks $n$ and number of samples per task $m$.
When one of the parameters changes, the other one is set to 10.}
\label{fig:fourier}
\end{figure*}
In this section we present experiments designed to verify three hypotheses:
\emph{(i)}
Under ideal circumstances,
the predictor $\bx^\top \bhtheta_n(\bhalpha\MLE)$ is superior to its
alternatives, including biased, but unweighted regression;
\emph{(ii)} The \ac{EM}-algorithm reliably recovers unknown parameters of the environment and is also suitable for representation learning; \todoc{figure!}
\emph{(iii)} our distribution-dependent lower bound~\cref{eq:unbiased-lower-bound} is numerically sharp. In addition, we briefly report on experiments with a real-world dataset.
\setlength{\parskip}{0.15\baselineskip}%
\paragraph{Baselines.}
We consider two non-meta-learning baselines, that is \texttt{Linear Regression (All)} --- \ac{OLS} fitted on $\cD\deln = (D_i)_{i=1}^{n-1}$, which excludes the newly observed task, and \texttt{Linear Regression (Task)} --- \ac{OLS} fitted on a newly encountered task $D_n$.
Next, we consider meta-learning algorithms.
We report performance of  the \emph{unweighted} \texttt{Biased Regression}
procedure with
bias set to the least squares solution $(\sum_{i\neq n} \bX_i \tp \bX_i)^{-1}\sum_{i\neq n} \bX_i \tp \bY_i$
and $\lambda$ found by cross-validation (cf. \cref{sec:lambdaopt}).
Note that the bias and the regularization coefficient are found on $\cD\deln$, while $D_n$ is used for the final fitting.
\texttt{EM Learner} is estimator \eqref{eq:theta-cov} with all environment parameters found by \cref{alg:EM} on $\cD\deln$. The convergence threshold was set to $10^{-6}$ while the maximum number of iterations was set to $10^3$.
Finally, we report numerical values of \cref{eq:unbiased-lower-bound} as \texttt{Known Covariance Lower Bound}.
For all of the experiments
we show averages and standard deviations of the mean test errors
computed over $30$ independent runs of that experiment.
\paragraph{Synthetic Experiments.} We conduct synthetic experiments
on datasets with Fourier generated features and features sampled
from a $d$-dimensional unit sphere. In all of the synthetic experiments
we have $\b\alpha=\b0, \sigma^2=1$ and $\b\Sigma$ generated by computing
$
\bSigma = \b{LL}\tp + \eta\bI
$
where $L_{ij} = \ind{i \geq j}Z_{ij}$ with
$Z_{ij}$ and $\eta$ sampled from the standard normal distribution.
Test error is computed on 100 test tasks using 10 examples
for training and $100$ examples for testing.

For the Fourier-features,
we sample a value
$u\sim\mc{U}(-5,5)$
and compute features by evaluating $d=11$ Fourier basis functions at $u$:
$x_j = \one{1\le j\le 5}\sin(\frac{j}5 \pi u)
+\one{6\le j \le 10}\cos(\frac{j-5}5 \pi  u)+ \one{j=11}$, where $\one{E}=1$ if $E$ is true and $\one{E}=0$ otherwise.
Examples of these tasks and results of meta-learning on some of these
were shown in \cref{fig:fourier-examples}.
In \cref{fig:fourier} we show the test errors for various meta-learners
while varying the number
of tasks $n$ and task sizes $m$. For the `spherical' data, the same is shown in \cref{fig:spherical}.
Here, we generate $\bx$ from a $d=42$ dimensional unit sphere. In both
experiments for sufficiently large number of training tasks
the EM-based learner approaches the optimal estimator
even when the number of examples per task is less than the dimensionality
of that task.

In the context of the `Fourier' dataset,
we also experimented with generating low-rank $\bSigma$, corresponding to
the challenge of learning a low-dimensional representation, shared across the tasks.
We found that the EM-based meta learner stays competitive in this setting.
To save space, the results are presented in \cref{sec:apxexps}.
\begin{figure*}
\vspace*{-0.1in}
\centering
\includegraphics[width=0.43\linewidth]{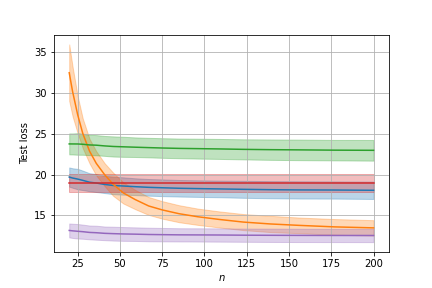}
\includegraphics[width=0.43\linewidth]{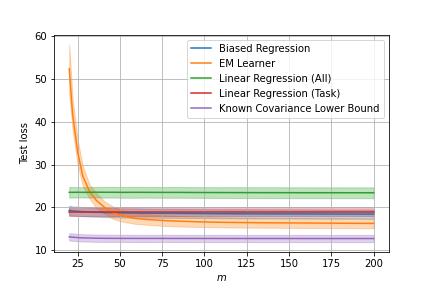}
\vspace*{-0.1in}
\caption{Test error on spherical synthetic experiment with
changing number of tasks $n$ and number of samples per task $m$.
When one of the parameters changes, the other one is set to 40.}
\label{fig:spherical}
\end{figure*}
\paragraph{Real Dataset Experiment.} We also conducted experiments
on a real world dataset containing information about students in
$139$ schools in years $1985$-$1987$~\citep[School Dataset]{dua2017openml}.  We adapt the dataset to
a meta-learning problem with the goal to predict the exam score
of students based on the student-specific and school-specific
features. After one-hot encoding of the categorical values there
are $d=27$ features for each student.
We randomly split schools into two subsets: The first, consisting of $100$ schools, forms $\cD\deln$ (used for training the bias, $\lambda$ selection, and EM).
The second subset consists of $39$ schools, where each school is further split into $80\%/20\%$ for
the final training
and testing of the meta-learners.
Results are given
in \cref{fig:school}.
We can see that while both Biased Regression and EM Learner outperform
regression, their
performance is very similar. This could be attributed to the fact
that the features mostly contain weakly relevant information,
which is confirmed by inspecting the coefficient vector.
\begin{figure}[t]
\vspace*{-0.1in}
\centering
\includegraphics[width=0.45\textwidth]{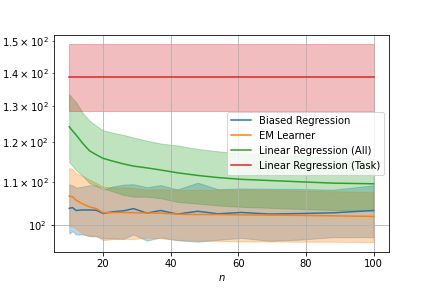}
\vspace*{-0.2in}
\caption{
Test error on the School Dataset. Up to $100$ schools are
used for fitting environment-related parameters (see text for
details) and the remaining $39$ are used as the target task.
}
\label{fig:school}
\end{figure}

\paragraph{Representation learning experiments I.}

\begin{figure*}[t]
\centering
\includegraphics[width=0.45\linewidth]{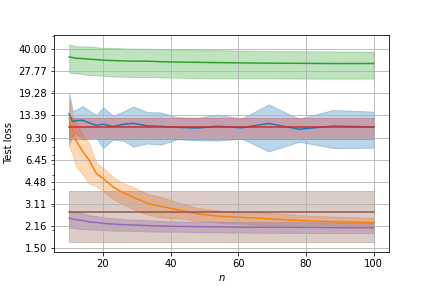}
\includegraphics[width=0.45\linewidth]{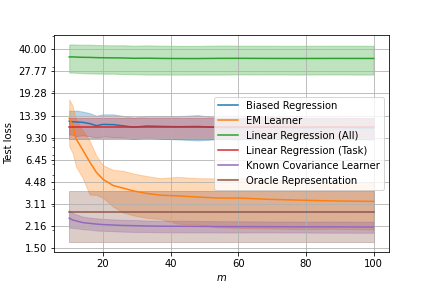}
\caption{Test error when the task covariance matrix is low-rank.
As usual, on the left the number of tasks is changed, on the right, the number of training datapoints (per task). When one parameter is varied, the other is set to the value of $10$.
}
\label{fig:fourier-lowrank}
\end{figure*}

Our next figure (\cref{fig:fourier-lowrank}) shows the outcomes of
experiments for the Fourier task but when $\bSigma$ is low-rank.
As can be seen, the EM based learner excels in exploiting the
low-rank structure. For this experiment we have the same setup as
for the Fourier experiment, but the covariance matrix $\bSigma$ is
generated by computing $\bSigma=\b{LL}\tp$ where $\b{L}$ is a
$d\times r$ matrix with $r=\lfloor d/2\rfloor=5$ and elements
$L_{i,j}\sim\cN(0,1)$. Note that in this case we can write
$\btheta_i = \b{B}\boldsymbol{w}_i$
for some matrix $\b{B}$ of size $d\times r$ and vector
$\b{w}_i$ of size $r$ sampled from multivariate normal distribution.
Thus, if the matrix $\b{B}$ is known or estimated during
training, one can project the features $\bx_{i,j}$ onto a lower-dimensional
space by computing $\b{B}^\top \bx_{i,j}$ to speed up the
adaptation to new tasks by running least-squares regression to estimate
$\boldsymbol{w}_i$ instead of $\b{\theta}_i$.

In addition to the baselines described in the main text, we compared
\texttt{EM Learner} with two additional baselines: one is based on the \ac{MoM} estimator from \citet{tripuraneni2020provable} (not shown
on the figure), and another which we refer to as \texttt{Oracle
Representation}.  We omit displaying the error of the method of
moments estimator since for the features generated as in this
experiment it is not able to perform estimation of the subspace and
leads to test errors with values around $60$. At the same time, as shown on
\cref{fig:fourier-lowrank} (left) we observe that \texttt{EM Learner}
can outperform \texttt{Oracle Representation} which assumes the
knowledge of the covariance matrix $\bSigma$ from which it computes
the subspace matrix $\b{B}$ and uses it to obtain
lower-dimensional representation of the features when adapting to
a new task via least-squares, as described above.
This is possible
because
the coefficients estimated
by EM are biased toward $\balpha$ which does not happen with least
squares regression in the lower dimensional subspace
and this is beneficial, especially when the
number of test-task training examples is small.

\paragraph{Representation learning experiments II.}
\begin{figure}
\centering
\includegraphics[width=0.85\linewidth]{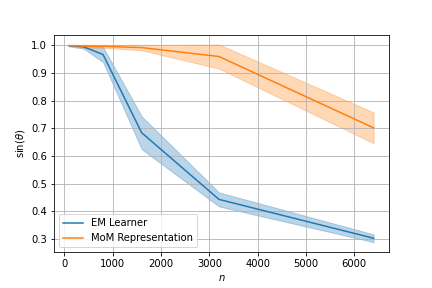}
\caption{Max-correlation $d_{\max}(\bhB, \bB)$
  between the estimated matrix $\bhB$ (by the respective algorithm) and the ground truth matrix $\bB$ while increasing number of tasks $n$.
}
\label{fig:EM_vs_MoM}
\end{figure}

To validate our implementation of the \ac{MoM} estimator of \citet{tripuraneni2020provable} and to investigate more whether \ac{EM} is preferable to the \ac{MoM} estimator beyond the setting that is ideal for the \ac{EM} method we considered the experimental setup of \citet{tripuraneni2020provable}.

To explain the setup, we recall that
the \ac{MoM} estimator computes an estimate $\bhB$ of the ground truth matrix $\bB$.
\citet{tripuraneni2020provable} proves results for the \emph{max-correlation} between $\bhB$ and $\bB$,
and also reports experimentally measured  max-correlation values between the ground truth and the \ac{MoM} computed matrix.
The max-correlation between matrices $\bA$ and $\bA'$ is based on the definition of
\emph{principal angles} and is equal to $d_{\max}(\bA, \bA') = \sqrt{1 - \cos^2(\bA,\bA')} $ where
$\cos(\bA,\bA') =
\max_{\bu \in \spn(\bA):\norm{\bu}=1} \max_{\bv \in \spn(\bA'): \norm{\bv}=1} \bu \tp \bv $.
Intuitively, max-correlation captures how well the subspaces spanned by matrices $\bA$ and $\bA'$ are aligned.

To compare our \ac{EM} estimator to \ac{MoM}
we run the \ac{EM} estimator as described in \cref{alg:EM}, and once the final estimate $\bhSigma$ is obtained, we reduce its rank by clipping eigenvalues $\lambda_{s+1} \geq \ldots \geq \lambda_{d}$ to $0$.

We follow the experimental setup of \citet{tripuraneni2020provable}, that is, inputs are generated as $\bx_i \sim \cN(0, \bI_d)$, while the regression model is given by \cref{eq:model} with $(\sigma^2, \bSigma) = (1, \tfrac{1}{s} \bB\bB\tp )$.
Here, columns of $\bB \in \R^{d \times s}$ are sampled from a uniform distribution on a unit $d$-sphere.
Finally, the number of examples per previously observed task is set as $m_1 = ... = m_{n-1} = 5$, the representation rank is $s = 5$, the input dimension is $d = 100$, and the experiment is repeated $30$ times. Since we only estimate
the subspace matrix we do not use the data from the test task $(\bX_n,\by_n)$.

We report our results in \cref{fig:EM_vs_MoM}, plotting the max-correlation between $\bhB$ found by the respective algorithm and $\bB$, while increasing the number of tasks.
We see that \verb!EM learner! considerably outperforms \verb!MoM Representation! in terms of the subspace estimation to the degree captured by max-correlation.
While we suspect that the improvement is due to the joint optimization over the covariance of environment and the mean of the environment (the bias in biased regularization), the detailed understanding of this effect is left for the future work.

\section{Conclusions}
\label{sec:conc}
%
While ours is the first work to derive matching, distribution-dependent lower and upper bounds, much works remains to be done:  our approach to derive meta-learning algorithms based on a probabilistic model should be applicable more broadly and could lead to further interesting developments in meta-learning.
The most interesting narrower question is to theoretically analyze the EM algorithm. Doing this in the low-rank setting looks particularly interesting. We hope that our paper will inspire other researchers to do further work in this area.

%


\bibliography{ref}
\bibliographystyle{icml2021}

\appendix
\onecolumn


\section{Parameters of the Posterior Distribution}
\label{appendix:posterior-params}
Recall that
\[
  p_n(\btheta_n \,|\, \mc{D}) \propto \pgauss(\bY_n\,|\, \bX_n \btheta_n, \sigma^2 \bI) \pgauss(\btheta_n \,|\, \balpha, \bSigma)~.
\]
We first give a handy proposition for the posterior distribution over the parameters $\btheta_n$.
\begin{proposition}
  \label{prop:posterior_ll}
  \[
    \ln p_n(\btheta_n)
    = -\frac{1}{2}(\btheta_n - \bmu)
    \bsT^{-1}(\btheta_n - \bmu) + \const(\btheta_n)
  \]
  where covariance is
  \[
    \bsT = \left(\bSigma^{-1} + \frac{1}{\sigma^2}\bX_n\tp\bX_n\right)^{-1}
  \]
  and mean is
  \[
    \bmu
    = \bsT\left(\bSigma^{-1}\balpha + \frac{1}{\sigma^2}
      \bX_n\tp\bY_n\right)~.
  \]
\end{proposition}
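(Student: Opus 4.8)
The plan is to carry out the standard Gaussian conjugacy computation: expand the two quadratic forms appearing in $\ln p_n(\btheta_n)$, collect the terms that are quadratic and linear in $\btheta_n$, and complete the square. Concretely, starting from
$\ln p_n(\btheta_n) = \ln \pgauss(\bY_n \,|\, \bX_n \btheta_n, \sigma^2 \bI) + \ln \pgauss(\btheta_n \,|\, \balpha, \bSigma) + \const(\btheta_n)$,
I would write the right-hand side as $-\tfrac{1}{2\sigma^2}\norm{\bY_n - \bX_n\btheta_n}^2 - \tfrac12 (\btheta_n - \balpha)\tp \bSigma^{-1}(\btheta_n - \balpha) + \const(\btheta_n)$, where here and below $\const(\btheta_n)$ denotes a quantity not depending on $\btheta_n$ (the normalizing constants of the two Gaussian densities), whose value may change from line to line.

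Next I would expand both squares and group by the power of $\btheta_n$. The quadratic-in-$\btheta_n$ part is $-\tfrac12 \btheta_n\tp\!\pr{\bSigma^{-1} + \tfrac{1}{\sigma^2}\bX_n\tp\bX_n}\btheta_n$, so the posterior precision matrix is $\bsT^{-1} = \bSigma^{-1} + \tfrac{1}{\sigma^2}\bX_n\tp\bX_n$; this is invertible since it is the sum of the positive definite matrix $\bSigma^{-1}$ and the \ac{PSD} matrix $\tfrac{1}{\sigma^2}\bX_n\tp\bX_n$, which justifies the definition of $\bsT$. The linear-in-$\btheta_n$ part is $\btheta_n\tp\!\pr{\bSigma^{-1}\balpha + \tfrac{1}{\sigma^2}\bX_n\tp\bY_n}$, and the remaining terms ($\norm{\bY_n}^2$ and $\balpha\tp\bSigma^{-1}\balpha$) are absorbed into $\const(\btheta_n)$.

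Finally I would invoke the elementary identity that, for a symmetric invertible $\bA$ and a vector $\bb$, $-\tfrac12 \bv\tp \bA \bv + \bv\tp \bb = -\tfrac12 (\bv - \bA^{-1}\bb)\tp \bA (\bv - \bA^{-1}\bb) + \tfrac12 \bb\tp \bA^{-1}\bb$, applied with $\bA = \bsT^{-1}$, $\bv = \btheta_n$, and $\bb = \bSigma^{-1}\balpha + \tfrac{1}{\sigma^2}\bX_n\tp\bY_n$. This yields exactly $\ln p_n(\btheta_n) = -\tfrac12(\btheta_n - \bmu)\bsT^{-1}(\btheta_n - \bmu) + \const(\btheta_n)$ with $\bmu = \bsT\pr{\bSigma^{-1}\balpha + \tfrac{1}{\sigma^2}\bX_n\tp\bY_n}$, as claimed. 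I do not anticipate a genuine obstacle here: the only points requiring a word of care are (i) the invertibility of $\bsT^{-1}$, noted above, and (ii) being explicit that $\const(\btheta_n)$ collects all $\btheta_n$-free terms and hence changes meaning line to line; everything else is routine algebra. (As an aside, Proposition~\ref{cor:E_cond_D} then follows immediately: $\E[\btheta_n \mid \cD] = \bmu$, $\Cov[\btheta_n \mid \cD] = \bsT$, and $Y = \btheta_n\tp\bx + \eps$ gives $\E[Y\mid\cD] = \bx\tp\bmu$ and $\V[Y\mid\cD] = \bx\tp\bsT\bx + \sigma^2$.)
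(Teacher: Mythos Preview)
Your proposal is correct and follows essentially the same approach as the paper: expand the log-likelihood and log-prior, collect the quadratic and linear terms in $\btheta_n$, and complete the square to read off $\bsT^{-1}$ and $\bmu$. If anything, your write-up is slightly more careful than the paper's, since you explicitly justify the invertibility of $\bsT^{-1}$ and flag that $\const(\btheta_n)$ changes from line to line.
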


\begin{proof}
The log-likelihood is the following chain of identities:
\begin{align*}
  \ln p_n(\btheta_n)
  &=
    -\frac{\sum_{j=1}^{m_n}(
    \bx_{n,j}\tp\b{\theta}_n - Y_{n,j})^2}{2\sigma^2}
    -\frac{1}{2}(\btheta_n - \balpha)\tp
    \bSigma^{-1}(\btheta_n - \balpha) + \const(\btheta_n)\\
  &= -\frac{\sum_{j=1}^{m_n}(\btheta_n\tp\bx_{n,j}\bx_{n,j}\tp\btheta_n
    -2Y_{n,j}\bx_{n,j}\tp\b{\theta}_n)}{2\sigma^2}\\
  &= - \frac{1}{2}(\btheta_n\tp\bSigma^{-1}\btheta_n
    - 2\balpha\tp\bSigma^{-1}\btheta_n)
    + \const(\btheta_n)\\
  &= -\frac{1}{2}\left(\btheta_n\tp\bsT^{-1}\btheta_n
    - 2\left(\bSigma^{-1}\balpha
    + \frac{1}{\sigma^2}\bX_n\tp\bY_n\right)\tp\btheta_n
    \right) + \const(\btheta_n)\\
  &= -\frac{1}{2}(\btheta_n - \bmu)
    \bsT^{-1}(\btheta_n - \bmu) + \const(\btheta_n)~.
\end{align*}
\end{proof}
First note that the first consequence of \cref{prop:posterior_ll} is a \ac{MLE} for $\btheta_n$,
\[
  \bhtheta_n\MLE = \bsT\pr{\bSigma^{-1}\balpha + \frac{1}{\sigma^2}\bX_n\tp \bY_n}~.
\]
The second consequence is the following corollary which is obtained by taking
$Y = \btheta_n\tp \bx + \eps$
and simply observing that the mean of a $p_n(\btheta_n \,|\, \mc{D})$ is $\bmu$, and so $\E[Y \,|\, \mc{D}] = \bx\tp \bmu$ while the variance is $\V[Y \,|\, \mc{D}] = \E[(\bx\tp \btheta_n + \eps)^2 \,|\, \mc{D} ] - \E[(\bx\tp \btheta_n + \eps) \,|\, \mc{D} ]^2 = \bx\tp \bsT \bx + \sigma^2$.
\paragraph{\cref{cor:E_cond_D} (restated).}\emph{
  Let $Y = \btheta_n\tp \bx + \eps$ for $\eps \sim \cN(0, \sigma^2)$ and some $\bx \in \R^d$.
  Then,
  \begin{align*}
    &\E[Y \,|\, \mc{D}] = \bx\tp \bsT \pr{\bSigma^{-1}\balpha + \frac{1}{\sigma^2}\bX_n\tp\bY_n}\\
    \text{and} \quad &\V[Y \,|\, \mc{D}] = \bx\tp \bsT \bx + \sigma^2~.
  \end{align*}}

\section{Proof of the Lower Bounds}\label{appendix:lower-bound-proofs}
Our task reduces to establishing lower bounds on
\begin{equation}
  \label{eq:app:E_alpha_halpha_deviation}
  \E\br{\pr{\bx\tp\bsT\bSigma^{-1}(\balpha - \bhalpha)}^2}
\end{equation}
for any choice of estimator $\bhalpha$, which in combination with \cref{lem:raw_lower_bound} will prove \cref{thm:main}.
In the next section we first prove a lower bound for any unbiased estimator relying on the Cramér-Rao inequality.
In what follows, in \cref{sec:app:general_lb}, we will show a general bound in~\cref{lem:E_alpha_general_lb} valid for any estimator (possibly biased) using a \emph{hypothesis testing} technique (see, e.g.\ \citep[Chap. 13]{lattimore2018bandit}).
Finally, in~\cref{lem:P_alpha_general_lb} we prove a high-probability lower bound on~\cref{eq:app:E_alpha_halpha_deviation}.
\subsection{Lower Bound for Unbiased Estimator $\bhalpha$}
\begin{theorem}[Cramér-Rao inequality]
  \label{thm:app:cramer-rao}
Suppose that $\balpha \in \R^d$ is an unknown deterministic parameter with a probability density function $f(x \,|\, \balpha)$ and that $\bhalpha$ is an unbiased estimator of $\balpha$.
Moreover assume that for all $i,j \in [d]$, $x : f(x \,|\, \balpha) > 0$, $\frac{\partial^2}{\partial \alpha_i \partial \alpha_j} \ln f(x \,|\, \balpha)$ exists and is finite, and $\frac{\partial^2}{\partial \alpha_i \partial \alpha_j} \int \bhalpha f(x \,|\, \balpha) \diff x = \int \bhalpha \pr{\frac{\partial^2}{\partial \alpha_i \partial \alpha_j} f(x \,|\, \balpha)} \diff x$.

Then, for the Fisher information matrix defined as
\[
  \bF = - \E\br{\nabla_{\balpha} \ln f(X \,|\, \balpha) \nabla_{\balpha} \ln f(X \,|\, \balpha)\tp}
\]
we have
\[
  \E\br{(\bhalpha - \E[\bhalpha]) (\bhalpha - \E[\bhalpha])\tp}
  \succeq
  \bF^{-1}~.
\]
\end{theorem}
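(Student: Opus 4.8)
The plan is to run the classical multivariate Cram\'er-Rao argument, which converts the claimed matrix inequality into a statement about the joint second moments of the estimator and the score function. Write $\b{s}(X) = \nabla_{\balpha}\ln f(X\,|\,\balpha)$ for the score; throughout, $\bF$ denotes the Fisher information, for which I use the outer-product form $\bF = \E\br{\b{s}(X)\b{s}(X)\tp}$ (equal to $-\,\E\br{\nabla_{\balpha}^2 \ln f(X\,|\,\balpha)}$ under the stated regularity conditions, by differentiating the score identity once more and using $\nabla_{\balpha} f = f\,\nabla_{\balpha}\ln f$). I also assume $\bF \succ \bzero$; otherwise one restricts to its range and reads $\bF^{-1}$ as a pseudo-inverse.

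First I would extract two consequences of the regularity hypotheses. Differentiating the normalization $\int f(x\,|\,\balpha)\diff x = 1$ under the integral sign and using $\nabla_{\balpha} f = f\,\nabla_{\balpha}\ln f$ yields $\E\br{\b{s}(X)} = \bzero$. Differentiating the unbiasedness identity $\int \bhalpha(x)\, f(x\,|\,\balpha)\diff x = \balpha$ componentwise --- exactly the interchange of differentiation and integration licensed by the last regularity assumption --- yields $\E\br{\bhalpha(X)\,\b{s}(X)\tp} = \bI$, and subtracting off $\E[\bhalpha]\,\E\br{\b{s}(X)}\tp = \bzero$ upgrades this to $\E\br{(\bhalpha - \E[\bhalpha])\,\b{s}(X)\tp} = \bI$.

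Next I would combine these two facts via Cauchy--Schwarz. Fix an arbitrary $\ba \in \R^d$ and set $U = \ba\tp(\bhalpha - \E[\bhalpha])$ and $V = \ba\tp \bF^{-1}\b{s}(X)$. Then $\E[UV] = \ba\tp\bF^{-1}\,\E\br{(\bhalpha - \E[\bhalpha])\,\b{s}(X)\tp}\,\ba = \ba\tp\bF^{-1}\ba$ and $\E[V^2] = \ba\tp\bF^{-1}\bF\bF^{-1}\ba = \ba\tp\bF^{-1}\ba$, so Cauchy--Schwarz gives $(\ba\tp\bF^{-1}\ba)^2 = \E[UV]^2 \le \E[U^2]\,\E[V^2] = \pr{\ba\tp\,\E\br{(\bhalpha-\E[\bhalpha])(\bhalpha-\E[\bhalpha])\tp}\,\ba}\pr{\ba\tp\bF^{-1}\ba}$. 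Dividing through by $\ba\tp\bF^{-1}\ba > 0$ gives $\ba\tp\,\E\br{(\bhalpha-\E[\bhalpha])(\bhalpha-\E[\bhalpha])\tp}\,\ba \ge \ba\tp\bF^{-1}\ba$ for every $\ba \in \R^d$, which is precisely $\E\br{(\bhalpha-\E[\bhalpha])(\bhalpha-\E[\bhalpha])\tp} \succeq \bF^{-1}$. Equivalently, one notes that the covariance matrix of the stacked vector $(\bhalpha - \E[\bhalpha],\ \b{s}(X))$ is positive semidefinite with off-diagonal block $\bI$ and lower-right block $\bF$, and the desired inequality is its Schur complement.

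I do not anticipate a genuine obstacle: the only delicate points are the two differentiations under the integral sign, and these are exactly what the regularity hypotheses grant, so in the final write-up those steps amount to invoking the assumptions rather than proving anything new. The actual mathematical content is the single Cauchy--Schwarz (equivalently, Schur-complement) step, with only the degenerate case $\bF \not\succ \bzero$ needing a word of care.
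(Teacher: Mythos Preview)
The paper does not supply its own proof of this theorem: it is stated as the classical multivariate Cram\'er-Rao inequality and then invoked as a black box in the subsequent lemma. Your argument is the standard textbook proof and is correct; you also rightly work with the outer-product form $\bF = \E\br{\b{s}(X)\b{s}(X)\tp}$ rather than the paper's displayed expression, which carries a spurious minus sign (the minus belongs with the Hessian form $-\E\br{\nabla_{\balpha}^2\ln f}$, and you note the equivalence). The only small slippage is that the regularity hypotheses as written concern \emph{second}-order differentiation under the integral, whereas your two key identities $\E[\b{s}]=\bzero$ and $\E\br{(\bhalpha-\E[\bhalpha])\b{s}\tp}=\bI$ need only first-order interchange; in a careful write-up you would either note that the stated conditions imply the first-order ones, or simply assume the first-order interchange directly.
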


\begin{lemma}
  For any unbiased estimator $\bhalpha$ of $\balpha$ in~\cref{eq:N_alpha_equiv} we have
  \begin{equation}
    \E\left[\left(\bx\tp\bsT\bSigma^{-1}
        (\balpha - \bhalpha)^2\right)\right]
    \geq \bx\tp\bsT\bSigma^{-1}
    (\bPsi\tp\bK\bPsi)^{-1}\bSigma^{-1}\bsT\bx.
  \end{equation}
\end{lemma}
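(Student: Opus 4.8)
The plan is to regard the marginal model~\eqref{eq:N_alpha_equiv}, $\bY \sim \cN(\bPsi\balpha, \bK)$, as a Gaussian location family in which $\balpha$ is the deterministic unknown parameter and the covariance $\bK$ does not depend on $\balpha$, and then invoke the Cramér--Rao inequality of \cref{thm:app:cramer-rao}. First I would compute the score: from $\ln \pgauss(\bY; \bPsi\balpha, \bK) = -\tfrac12 \pr{\bY - \bPsi\balpha}\tp \bK^{-1} \pr{\bY - \bPsi\balpha} + \mathrm{const}$ we obtain $\nabla_{\balpha} \ln \pgauss(\bY; \bPsi\balpha, \bK) = \bPsi\tp \bK^{-1}\pr{\bY - \bPsi\balpha}$. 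Since $\bY - \bPsi\balpha$ has mean zero and covariance $\bK$, the Fisher information matrix is $\bF = \bPsi\tp\bK^{-1}\, \E\br{\pr{\bY-\bPsi\balpha}\pr{\bY-\bPsi\balpha}\tp}\, \bK^{-1}\bPsi = \bPsi\tp\bK^{-1}\bPsi$.

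Next, because $\bhalpha$ is unbiased, \cref{thm:app:cramer-rao} yields the matrix inequality $\E\br{\pr{\balpha - \bhalpha}\pr{\balpha - \bhalpha}\tp} \succeq \bF^{-1} = \pr{\bPsi\tp\bK^{-1}\bPsi}^{-1}$. To finish, I would rewrite the target quantity as a fixed quadratic form in this error covariance: setting $\bv = \bSigma^{-1}\bsT\bx$, we have $\E\br{\pr{\bx\tp\bsT\bSigma^{-1}\pr{\balpha - \bhalpha}}^2} = \bv\tp \E\br{\pr{\balpha - \bhalpha}\pr{\balpha - \bhalpha}\tp} \bv$, and applying the \ac{PSD} ordering gives the lower bound $\bv\tp \pr{\bPsi\tp\bK^{-1}\bPsi}^{-1}\bv = \bx\tp\bsT\bSigma^{-1}\pr{\bPsi\tp\bK^{-1}\bPsi}^{-1}\bSigma^{-1}\bsT\bx$, which is precisely $\bx\tp\bM\bx$ in the notation of \cref{thm:main}. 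Combined with \cref{lem:raw_lower_bound}, this establishes the unbiased-estimator bound~\eqref{eq:unbiased-lower-bound}.

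All the steps are routine Gaussian calculus, so I do not anticipate a genuine obstacle; the two points that deserve a line of justification are (i) the regularity hypotheses of \cref{thm:app:cramer-rao} — finiteness of the second derivatives of the log-density and the interchange of differentiation and integration — which hold automatically for a nondegenerate Gaussian family whose covariance does not depend on the parameter, and (ii) the passage from the matrix Cramér--Rao bound to the scalar bound, where one should note that there is no optimization over $\bv$, so that $\bA \succeq \bB$ immediately gives $\bv\tp\bA\bv \geq \bv\tp\bB\bv$ for the single vector $\bv = \bSigma^{-1}\bsT\bx$. Throughout I take $\bSigma$ to be invertible; the rank-deficient case is treated separately by replacing $\bSigma^{-1}$ with its pseudo-inverse. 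The real content of the lemma is thus the choice of parametrization — estimating the marginal mean $\balpha$ rather than $\btheta_n$ — rather than the analysis itself.
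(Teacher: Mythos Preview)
Your proposal is correct and follows essentially the same approach as the paper: compute the score of the Gaussian marginal model $\bY \sim \cN(\bPsi\balpha,\bK)$, obtain the Fisher information $\bF = \bPsi\tp\bK^{-1}\bPsi$, apply the Cram\'er--Rao inequality to get $\E\br{(\balpha-\bhalpha)(\balpha-\bhalpha)\tp} \succeq \bF^{-1}$, and then pass to the scalar bound by sandwiching with $\bx\tp\bsT\bSigma^{-1}$ and its transpose. Your additional remarks on the regularity hypotheses and the pseudo-inverse case are not in the paper's proof but are harmless and correct.
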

\begin{proof}
Recall that according to the equivalence~\eqref{eq:N_alpha_equiv} $\bY \sim \cN(\bPsi \balpha, \bK)$ and the unknown parameter is $\balpha$.
To compute the Fisher information matrix we first observe that
\begin{align*}\label{eq:grad-log-likelihood}
  &\nabla_{\balpha} \ln \pgauss\pr{\bY ; \bPsi \balpha, \bK}
    = \bPsi\tp\bK^{-1}(\bY - \b{\Psi\alpha})
\end{align*}
and so
\begin{align*}
  \bF &=
  \E\br{
  \nabla_{\balpha} \ln \pgauss\pr{\bY ; \bPsi \balpha, \bK}
  \nabla_{\balpha} \ln \pgauss\pr{\bY ; \bPsi \balpha, \bK}\tp
  }\\
  &= \bPsi\tp\bK^{-1}\E\br{(\bY - \bPsi\balpha) (\bY - \bPsi\balpha)\tp} \bK^{-1} \bPsi\\
  &= \bPsi\tp\bK^{-1}\b{\Psi}.
\end{align*}
Thus, by \cref{thm:app:cramer-rao} we have
\begin{align*}
\E\br{(\balpha - \bhalpha) (\balpha - \bhalpha)\tp} \succeq (\bPsi\tp\bK^{-1}\bPsi)^{-1}~.
\end{align*}
Finally, left-multiplying by $\bx\tp\bsT\bSigma^{-1}$ and right-multiplying the above by $\bSigma^{-1} \bsT \bx$ gives us the statement.
\end{proof}
\subsection{Lower Bound for Any Estimator $\bhalpha$}
\label{sec:app:general_lb}
The proof of is based on the following lemma.
\begin{lemma}[\citealt{bretagnolle1979estimation}]
\label{lemma:bretagnolle-huber}
Let $P$ and $Q$ be probability measures on the same measurable

space $(\Omega, \mathcal{F})$, and let $A\in\mathcal{F}$ be an
arbitrary event. Then,
\begin{equation}
P(A) + Q(A^c) \geq \frac{1}{2}\exp(-\D(P, Q)),
\end{equation}
where $\D(P, Q) = \int_{\Omega} \ln\pr{P(\omega) / Q(\omega)} \diff P(\omega)$ denotes Kullback-Leibler divergence between $P$ and $Q$  and $A^c = \Omega \setminus A$ is the complement of $A$.
\end{lemma}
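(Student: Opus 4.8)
The plan is to reduce the inequality to a bound on the overlap between the two measures and then control that overlap via the Bhattacharyya (Hellinger) affinity. First I would fix a common dominating measure $\mu$, say $\mu = P + Q$, and let $p = \diff P/\diff\mu$, $q = \diff Q/\diff\mu$. If $P$ is not absolutely continuous with respect to $Q$, then $\D(P,Q) = \infty$ and the right-hand side is $0$, so there is nothing to prove; hence assume $\D(P,Q) < \infty$, which in particular forces $q > 0$ $P$-almost everywhere. Since $p \ge \min(p,q)$ on $A$ and $q \ge \min(p,q)$ on $A^c$,
\[
P(A) + Q(A^c) = \int_A p \,\diff\mu + \int_{A^c} q \,\diff\mu \;\ge\; \int_\Omega \min(p,q)\,\diff\mu ,
\]
so it suffices to prove $\int_\Omega \min(p,q)\,\diff\mu \ge \tfrac12 e^{-\D(P,Q)}$.

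Next I would introduce the affinity $\rho = \int_\Omega \sqrt{pq}\,\diff\mu$ and prove two inequalities that together give the claim. The first is $\int \min(p,q)\,\diff\mu \ge \tfrac12 \rho^2$: by Cauchy--Schwarz, writing $\sqrt{pq} = \sqrt{\min(p,q)}\,\sqrt{\max(p,q)}$, one gets $\rho^2 \le \left(\int \min(p,q)\,\diff\mu\right)\left(\int \max(p,q)\,\diff\mu\right)$, and since $\min(p,q) + \max(p,q) = p + q$ integrates to $2$, the second factor is at most $2$. The second is $\rho \ge e^{-\D(P,Q)/2}$: since $\sqrt{pq} = p\sqrt{q/p}$ wherever $p > 0$ (and $\sqrt{pq} = 0$ otherwise), we may write $\rho = \E_P[\exp(\tfrac12 \ln(q/p))]$, and Jensen's inequality for the convex map $\exp$ gives $\rho \ge \exp(\tfrac12 \E_P[\ln(q/p)]) = e^{-\D(P,Q)/2}$. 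Chaining these, $P(A) + Q(A^c) \ge \int \min(p,q)\,\diff\mu \ge \tfrac12 \rho^2 \ge \tfrac12 e^{-\D(P,Q)}$, which is the assertion.

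I do not expect a serious obstacle; the argument is elementary once the reduction to $\int \min(p,q)\,\diff\mu$ is made. The two points that need care are purely bookkeeping: (i) the measure-theoretic handling of the zero sets of $p$ and $q$, so that rewriting $\rho$ as an expectation under $P$ via $\sqrt{pq} = p\sqrt{q/p}$ is valid — this is where the earlier reduction to $\D(P,Q) < \infty$, hence $q > 0$ $P$-a.s., is used; and (ii) the direction of Jensen's inequality — it is $\exp$, a convex function, that is being averaged under $P$, which is exactly what produces the factor $e^{-\D(P,Q)/2}$ rather than some other combination. An alternative route would go through the data-processing inequality for the two-point channel $\omega \mapsto \one{\omega \in A}$ together with a lower bound on the binary KL divergence, but the Hellinger-affinity argument above is shorter and cleaner.
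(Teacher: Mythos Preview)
Your argument is correct and is in fact the standard route to the Bretagnolle--Huber inequality via the Hellinger affinity: reduce to $\int \min(p,q)\,\diff\mu$, bound that below by $\tfrac12\rho^2$ using Cauchy--Schwarz together with $\int\max(p,q)\,\diff\mu\le 2$, and then use Jensen's inequality for $\exp$ to get $\rho\ge e^{-\D(P,Q)/2}$. The measure-theoretic care you flag (handling $\D(P,Q)=\infty$ separately, and ensuring $q>0$ $P$-a.s.\ before rewriting $\rho$ as a $P$-expectation) is exactly what is needed.

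There is nothing to compare against here: the paper does not prove this lemma. It is stated with attribution to \citet{bretagnolle1979estimation} and used as a black box in the lower-bound arguments (\cref{lem:E_alpha_general_lb,lem:P_alpha_general_lb}). Your proof would serve perfectly well as a self-contained justification if one were desired.
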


\begin{lemma}
  \label{lem:E_alpha_general_lb}
  For any estimator $\bhalpha$ of $\balpha$ in~\cref{eq:N_alpha_equiv} we have
  \begin{align*}
    \E\br{\pr{\bx\tp \bsT \bSigma^{-1}(\bhalpha - \balpha)}^2}
    \geq
    \frac{\bx\tp \bM \bx}{16 \sqrt{e}}~.
  \end{align*}
\end{lemma}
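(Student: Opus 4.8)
The plan is to use Le Cam's two-point method in the form of the Bretagnolle--Huber inequality (Lemma~\ref{lemma:bretagnolle-huber}), applied to two nearby candidate values of the meta-mean $\balpha$. Write $\bv = \bSigma^{-1} \bsT \bx$, so the quantity to be lower bounded is $\E\br{\pr{\bv\tp(\bhalpha - \balpha)}^2}$, and note that by definition $\bx\tp\bM\bx = \bv\tp(\bPsi\tp\bK^{-1}\bPsi)^{-1}\bv$. Only the one-dimensional projection $\bv\tp\bhalpha$ of the estimator matters, so I would reduce to a scalar estimation problem along the direction picked out by $\bv$. Concretely, I would choose two parameter values $\balpha_0$ and $\balpha_1 = \balpha_0 + t \, \bu$ for a suitable vector $\bu$ and scale $t>0$, pick $\bu$ so that it is ``hardest to distinguish'' relative to $\bv$ — i.e.\ $\bu$ proportional to $(\bPsi\tp\bK^{-1}\bPsi)^{-1}\bv$ — and tune $t$ to balance the testing error against the separation $|\bv\tp(\balpha_1-\balpha_0)| = t\,\bv\tp\bu$.

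The key steps, in order, are: (i) Fix $\bv$ and set $\bu = (\bPsi\tp\bK^{-1}\bPsi)^{-1}\bv$, so that $\bv\tp\bu = \bx\tp\bM\bx$ and, using $\bY\sim\cN(\bPsi\balpha,\bK)$ from~\eqref{eq:N_alpha_equiv}, the KL divergence between the two hypotheses is $\D(P_{\balpha_0},P_{\balpha_1}) = \tfrac12 (\balpha_1-\balpha_0)\tp \bPsi\tp\bK^{-1}\bPsi (\balpha_1-\balpha_0) = \tfrac{t^2}{2}\,\bv\tp\bu$. (ii) Define the event $A = \{\bv\tp\bhalpha \le \tfrac12 \bv\tp(\balpha_0+\balpha_1)\}$; on $A$ under $P_{\balpha_1}$ and on $A^c$ under $P_{\balpha_0}$ the estimation error $|\bv\tp(\bhalpha-\balpha)|$ is at least $\tfrac12 t\,\bv\tp\bu$, so by Markov $\E_{\balpha_j}\br{(\bv\tp(\bhalpha-\balpha_j))^2} \ge \tfrac14 t^2 (\bv\tp\bu)^2 \cdot \big(P_{\balpha_1}(A) \wedge P_{\balpha_0}(A^c)\big)$ for the appropriate $j$, hence the max over $j\in\{0,1\}$ is at least $\tfrac18 t^2 (\bv\tp\bu)^2 \big(P_{\balpha_0}(A^c) + P_{\balpha_1}(A)\big)$. (iii) Apply Bretagnolle--Huber: $P_{\balpha_0}(A^c) + P_{\balpha_1}(A) \ge \tfrac12 \exp(-\D(P_{\balpha_0},P_{\balpha_1})) = \tfrac12 e^{-t^2 \bv\tp\bu/2}$. (iv) Combine to get a bound of the form $\tfrac{1}{16} t^2 (\bv\tp\bu)^2 e^{-t^2 \bv\tp\bu /2}$, substitute $s = t^2 \bv\tp\bu$, and optimize $s\,e^{-s/2}$ over $s>0$: the maximum is at $s=2$, giving value $2/e$, so the bound becomes $\tfrac{1}{16}\cdot\tfrac{2}{e}\cdot (\bv\tp\bu)\cdot \tfrac{1}{1}$... more carefully, $t^2(\bv\tp\bu)^2 e^{-t^2\bv\tp\bu/2} = (\bv\tp\bu)\cdot s e^{-s/2}$ which at $s=2$ equals $(\bv\tp\bu)\cdot 2/e$, yielding $\E\br{(\bv\tp(\bhalpha-\balpha))^2} \ge \tfrac{1}{16}\cdot\tfrac{2}{e}\,\bv\tp\bu = \tfrac{1}{8e}\bx\tp\bM\bx$. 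Since $1/(8e) \ge 1/(16\sqrt e)$ this would in fact be slightly stronger than claimed; to land exactly on $\tfrac{1}{16\sqrt e}$ I would instead absorb one factor of $e^{1/2}$ by choosing $s=1$ (giving $s e^{-s/2} = e^{-1/2}$), which produces the constant $\tfrac{1}{16\sqrt e}$ exactly — matching the statement and presumably the route the authors take, since it avoids claiming a constant they don't need.

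The main obstacle is step (iv), and more precisely getting the constants to line up with the way the worst-case analysis is actually set up: the reduction from $\E[(\cdot)^2]$ over a single fixed $\balpha$ to a two-point testing argument requires taking a supremum (or average) over the two hypotheses, and one must be careful that the Markov/Chebyshev truncation only costs the advertised factor of $\tfrac14$ rather than more. A secondary subtlety is ensuring $\bv\tp\bu > 0$ so the separation is genuine — this holds because $(\bPsi\tp\bK^{-1}\bPsi)^{-1}$ is positive definite (assuming $\bPsi$ has full column rank; otherwise $\bx\tp\bM\bx$ is defined via the pseudo-inverse and one restricts to the range, and the argument goes through verbatim on that subspace). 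Everything else — the closed form of the KL divergence between Gaussians with common covariance, the choice of $\bu$ aligned with $(\bPsi\tp\bK^{-1}\bPsi)^{-1}\bv$, and the scalar optimization of $s\mapsto s e^{-s/2}$ — is routine.
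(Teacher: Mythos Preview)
Your proposal is correct and follows essentially the same route as the paper: two-point Le Cam with the alternative taken along $(\bPsi\tp\bK^{-1}\bPsi)^{-1}\bv$, Markov's inequality to pass from second moments to tail probabilities, Bretagnolle--Huber, and then the elementary $2\max\{a,b\}\ge a+b$. Your observation about the constant is also accurate: the paper in fact sets $\Delta^2=(\bx\tp\bM\bx)^{-1}$, i.e.\ your $s=1$, which yields exactly $1/(16\sqrt e)$ rather than the tighter $1/(8e)$ one gets from optimizing $s\mapsto s e^{-s/2}$ at $s=2$.
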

\begin{proof}
Throughout the proof let $\bq = \bSigma^{-1} \bsT \bx$.
Consider two meta-learning problems with target distributions $\sP$ and $\sQ$ characterized by two means: $\balpha_{\sP} = \bzero$ and $\balpha_{\sQ} = \Delta (\bPsi\tp\bK^{-1}\bPsi)^{-1} \bq$ where $\Delta > 0$ is a free parameter to be tuned later on.
Thus, according to our established equivalence~\eqref{eq:N_alpha_equiv}, in these two cases targets are generated by respective models $\sP = \cN(\bzero, \bK)$ and $\sQ = \cN(\Delta \bPsi (\bPsi\tp\bK^{-1}\bPsi)^{-1} \bq, \bK)$.

Recall our abbreviation $\bM = \bsT\bSigma^{-1}(\bPsi\tp\bK^{-1}\bPsi)^{-1} \bSigma^{-1} \bsT$.
Markov's inequality gives 
\begin{align*}
  \E_{\sP}\br{(\bhalpha\tp \bq - \balpha_{\sP}\tp \bq)^2}
  &=
    \E_{\sP}\br{(\bhalpha\tp \bq)^2}
    \geq \frac{\Delta^2}{4} \pr{\bx\tp \bM \bx}^2  \sP\pr{|\bhalpha\tp \bq| \geq \frac{\Delta}{2} \bx\tp \bM \bx}\,, \qquad \text{while} \\ 
    \E_{\sQ}\br{(\bhalpha\tp \bq - \balpha_{\sQ}\tp \bq)^2}
  &\geq
    \frac{\Delta^2}{4} \pr{\bx\tp \bM \bx}^2  \sQ\pr{|\balpha_{\sQ}\tp \bq - \bhalpha\tp \bq| \geq \frac{\Delta}{2} \bx\tp \bM \bx}\\
  &\geq \frac{\Delta^2}{4} \pr{\bx\tp \bM \bx}^2  \sQ\pr{|\bhalpha\tp \bq| < \frac{\Delta}{2} \bx\tp \bM \bx}\,,
\end{align*}
where the last inequality comes using the fact that $|a-b| \geq |a| - |b|$ for $a,b \in \R$ and observing that $\balpha_{\sQ}\tp \bq = \bx\tp \bM \bx$.
Summing both inequalities above and applying \cref{lemma:bretagnolle-huber} we get
\begin{align*}
  \E_{\sP}\br{(\bhalpha\tp \bq - \balpha_{\sP}\tp \bq)^2}
  +
  \E_{\sQ}\br{(\bhalpha\tp \bq - \balpha_{\sQ}\tp \bq)^2}
  &\geq
    \frac{\Delta^2}{8} \pr{\bx\tp \bM \bx}^2 \cdot \exp\pr{-\D(\sP,\sQ)}\\
  &\stackrel{(a)}{=}
    \frac{\Delta^2}{8} \pr{\bx\tp \bM \bx}^2 \cdot \exp\pr{-\frac{\Delta^2}{2} \bx\tp \bM \bx}\,,
\end{align*}
where step $(a)$ follows from $\text{KL}$-divergence between multivariate Gaussians with the same covariance matrix.
Now, using a basic fact that $2 \max\cbr{a,b} \geq a + b$, we get that for any measure $\sP$ given by parameter $\balpha$ we have
\begin{align*}
  \E\br{(\bhalpha\tp \bq - \balpha\tp \bq)^2}
  \geq
  \frac{\Delta^2}{16} \pr{\bx\tp \bM \bx}^2 \cdot \exp\pr{-\frac{\Delta^2}{2} \bx\tp \bM \bx}~.
\end{align*}
The statement then follows by choosing $\Delta^2 = (\bx\tp \bM \bx)^{-1}$.
\end{proof}
Now we prove a high-probability version of the just given inequality.
\begin{lemma}
  \label{lem:P_alpha_general_lb}
  For any estimator $\bhalpha$ of $\balpha$ in~\cref{eq:N_alpha_equiv} and any $\delta \in (0,1)$ we have
  \begin{align*}
    \sP\pr{\pr{\bx\tp \bsT \bSigma^{-1}(\bhalpha - \balpha)}^2 \geq \ln\pr{\frac14 \cdot \frac{1}{1 - \delta}} \bx\tp \bM \bx}
    \geq 1 - \delta~.
  \end{align*}
\end{lemma}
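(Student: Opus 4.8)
The plan is to recycle the two–point (Le Cam) construction from the proof of \cref{lem:E_alpha_general_lb}, but instead of passing through Markov's inequality I will read a high–probability statement directly off the Bretagnolle--Huber inequality (\cref{lemma:bretagnolle-huber}). Set $\bq = \bSigma^{-1}\bsT\bx$ and take the same pair of environments: $\sP = \cN(\bzero,\bK)$ with meta-mean $\balpha_{\sP}=\bzero$, and $\sQ = \cN\pr{\Delta\,\bPsi(\bPsi\tp\bK^{-1}\bPsi)^{-1}\bq,\ \bK}$ with meta-mean $\balpha_{\sQ} = \Delta\,(\bPsi\tp\bK^{-1}\bPsi)^{-1}\bq$, where $\Delta>0$ is a free parameter to be fixed at the end. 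As there, $\balpha_{\sQ}\tp\bq = \Delta\,\bx\tp\bM\bx$ (since $\bq\tp(\bPsi\tp\bK^{-1}\bPsi)^{-1}\bq = \bx\tp\bM\bx$) and, by \cref{eq:N_alpha_equiv} and the closed form for the KL divergence between two Gaussians with a common covariance, $\D(\sP,\sQ) = \tfrac{\Delta^2}{2}\,\bx\tp\bM\bx$.

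Next I would fix the ``decision'' event $E = \cbr{\,\abs{\bhalpha\tp\bq} < \tfrac{\Delta}{2}\,\bx\tp\bM\bx\,}$ and check that on the event that is wrong for a given hypothesis the squared deviation $\pr{\bx\tp\bsT\bSigma^{-1}(\bhalpha-\balpha)}^2 = \pr{\bq\tp(\bhalpha-\balpha)}^2$ is at least $\tfrac{\Delta^2}{4}(\bx\tp\bM\bx)^2$: on $E^c$ this holds under $\sP$ because $\bq\tp(\bhalpha-\balpha_{\sP}) = \bhalpha\tp\bq$, and on $E$ it holds under $\sQ$ because, by $\abs{a-b}\geq\abs{a}-\abs{b}$ and $\balpha_{\sQ}\tp\bq = \Delta\,\bx\tp\bM\bx$, we get $\abs{\bq\tp(\bhalpha-\balpha_{\sQ})} \geq \tfrac{\Delta}{2}\bx\tp\bM\bx$. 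Applying \cref{lemma:bretagnolle-huber} to $E$ gives $\sP(E^c) + \sQ(E) \geq \tfrac12\exp\pr{-\D(\sP,\sQ)} = \tfrac12\exp\pr{-\tfrac{\Delta^2}{2}\bx\tp\bM\bx}$, hence $\max\cbr{\sP(E^c),\,\sQ(E)} \geq \tfrac14\exp\pr{-\tfrac{\Delta^2}{2}\bx\tp\bM\bx}$. Whichever term attains the maximum singles out an environment with meta-mean $\balpha\in\cbr{\balpha_{\sP},\balpha_{\sQ}}$ such that, under the corresponding target distribution, $\pr{\bx\tp\bsT\bSigma^{-1}(\bhalpha-\balpha)}^2 \geq \tfrac{\Delta^2}{4}(\bx\tp\bM\bx)^2$ with probability at least $\tfrac14\exp\pr{-\tfrac{\Delta^2}{2}\bx\tp\bM\bx}$.

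The final step is to tune $\Delta$: choosing $\Delta^2\,\bx\tp\bM\bx = 2\ln\pr{\tfrac{1}{4(1-\delta)}}$ makes $\tfrac14\exp\pr{-\tfrac{\Delta^2}{2}\bx\tp\bM\bx} = 1-\delta$ and turns the threshold into $\tfrac{\Delta^2}{4}(\bx\tp\bM\bx)^2 = \tfrac12\ln\pr{\tfrac{1}{4(1-\delta)}}\,\bx\tp\bM\bx$, i.e.\ the high–probability term appearing in \cref{thm:main}. (If $\bx\tp\bM\bx = 0$, or if $\delta$ is small enough that $\ln\pr{\tfrac{1}{4(1-\delta)}}\leq 0$, the inequality is vacuous since the left-hand quantity is nonnegative; the content lies in the regime $\delta$ close to $1$, where the displayed choice of $\Delta$ is admissible.) Combined with \cref{lem:raw_lower_bound}, which (since the deviation term is $\cD$-measurable) gives $\cL(\bx) = \pr{\bx\tp\bsT\bSigma^{-1}(\balpha-\bhalpha)}^2 + \bx\tp\bsT\bx + \sigma^2$ pointwise in $\cD$, this delivers the stated high-probability lower bound on $\cL(\bx)$. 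I do not expect a genuinely hard step here: the construction and the KL computation are imported from \cref{lem:E_alpha_general_lb}, and the only points needing care are the bookkeeping that ``max of the two error probabilities'' legitimately produces one worst-case environment (the statement quantifies over all estimators but only asserts existence of a bad environment) and verifying that the threshold surviving the tuning of $\Delta$ is the claimed one.
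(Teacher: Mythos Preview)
Your proposal is correct and follows essentially the same approach as the paper's proof: the identical two-point construction with $\balpha_{\sP}=\bzero$ and $\balpha_{\sQ}\propto(\bPsi\tp\bK^{-1}\bPsi)^{-1}\bq$, the same application of the Bretagnolle--Huber inequality (\cref{lemma:bretagnolle-huber}), the same passage from the sum to the max via $2\max\{a,b\}\geq a+b$, and the same tuning of $\Delta$. The only differences are a harmless reparametrization of $\Delta$ (the paper normalizes $\balpha_{\sQ}$ by $\bx\tp\bM\bx$, you do not) and that your final threshold carries the factor $\tfrac12$ matching \cref{thm:main}, whereas the lemma as stated and the paper's last displayed line drop this $\tfrac12$---your bookkeeping is the careful one here.
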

\begin{proof}
  The proof is very similar to the proof of~\cref{lem:E_alpha_general_lb} except we will not apply Markov's inequality and focus directly on giving a lower bound the deviation probabilities rather than expectations.
  Thus, similarly as before introduce mean parameters
  $\balpha_{\sP} = \bzero$ and $\balpha_{\sQ} = \Delta (\bPsi\tp\bK^{-1}\bPsi)^{-1} \bq / \pr{\bx\tp \bM \bx}$
  and their associated probability measures
  $\sP = \cN(\bzero, \bK)$ and $\sQ = \cN\pr{\frac{\Delta \bPsi (\bPsi\tp\bK^{-1}\bPsi)^{-1} \bq}{\bx\tp \bM \bx}, \bK}$.

  Note that
  \begin{align*}
    &\sP\pr{|\bhalpha\tp \bq| \geq \frac{\Delta}{2}} = \sP\pr{|\balpha_{\sP}\tp \bq - \bhalpha\tp \bq| \geq \frac{\Delta}{2}}~,\\
    &\sQ\pr{|\balpha_{\sQ}\tp \bq - \bhalpha\tp \bq| \geq \frac{\Delta}{2}}
    \geq
    \sQ\pr{|\bhalpha\tp \bq| < \frac{\Delta}{2}}
  \end{align*}
  and so by using~\cref{lemma:bretagnolle-huber} we obtain an exponential tail bound
  \begin{align*}
    \sP\pr{|\balpha_{\sP}\tp \bq - \bhalpha\tp \bq| \geq \frac{\Delta}{2}}
    +
    \sQ\pr{|\balpha_{\sQ}\tp \bq - \bhalpha\tp \bq| \geq \frac{\Delta}{2}}
    \geq
    \exp(-\D(\sP \,||\, \sQ))
    =
    \frac12 \exp\pr{-\frac{\Delta^2}{\bx\tp \bM \bx}}~.
  \end{align*}
  Setting the r.h.s.\ in the above to $2(1-\delta)$ where $\delta$ is an error probability, and solving for $\Delta$ gives us tuning
  \[
    \Delta^2 = 2 \ln\pr{\frac14 \cdot \frac{1}{1 - \delta}} \bx\tp \bM \bx~.
  \]
  Thus, we get
  \begin{align*}
    \sP\pr{\pr{\balpha_{\sP}\tp \bq - \bhalpha\tp \bq}^2 \geq \ln\pr{\frac14 \cdot \frac{1}{1 - \delta}} \bx\tp \bM \bx}
    +
    \sQ\pr{|\balpha_{\sQ}\tp \bq - \bhalpha\tp \bq| \geq \ln\pr{\frac14 \cdot \frac{1}{1 - \delta}} \bx\tp \bM \bx}
\geq 2 (1-\delta)
  \end{align*}
  and using the fact that $2 \max(a,b) \geq a + b$ we get that for any probability measure $\sP$ given by parameter $\balpha$ we have
  \[
    \sP\pr{\pr{\balpha_{\sP}\tp \bq - \bhalpha\tp \bq}^2 \geq \ln\pr{\frac14 \cdot \frac{1}{1 - \delta}} \bx\tp \bM \bx}
    \geq 1 - \delta~.
  \]
\end{proof}

\section{Proof of the Upper Bounds}\label{appendix:upper-bound-proofs}
\paragraph{\cref{thm:upper_bounds} (restated).}
\emph{  
For the estimator $\bhtheta_n(\bhalpha\MLE)$ and for any $\bx \in \R^d$ we have
\[
  \E\br{\mc{L}(\bx)} = \bx\tp \bM \bx + \bx\tp\bsT\bx + \sigma^2.
\]
Moreover for the same estimator, with probability at least $1 - \delta, \delta \in (0,1)$ we have
\[
  \mc{L}(\bx) \leq 2\ln\pr{\frac{2}{\delta}} \bx\tp \bM \bx
  + \bx\tp\bsT\bx + \sigma^2.
\]
}
\begin{proof}
  Recall that
  \[
    \bhalpha\MLE = (\bPsi\tp\bK^{-1}\bPsi)^{-1}\bPsi\tp\bK^{-1} \bY~.
  \]
  The first result follows from \cref{lem:raw_lower_bound} where we have to give an identity for
  \begin{equation}
    \label{eq:appendix:E_raw_lower_upper}
    \E\br{\pr{\bx\tp\bsT\bSigma^{-1}(\balpha - \bhalpha\MLE)}^2}
  \end{equation}
  and the missing piece is a covariance of the estimator $\bhalpha\MLE$
  \begin{align}
    &\E\br{(\balpha - \bhalpha\MLE)(\balpha - \bhalpha\MLE)\tp} \nonumber\\
    &= (\bPsi\tp\bK^{-1}\bPsi)^{-1}\bPsi\tp\bK^{-1} \Cov(\bY,\bY) \bK^{-1}\bPsi (\bPsi\tp\bK^{-1}\bPsi)^{-1} \nonumber\\
    &= (\bPsi\tp\bK^{-1}\bPsi)^{-1}~. \label{eq:appendix:alpha_MLE_cov}
  \end{align}
  To prove the second result we have to give a high probability upper bound on~\cref{eq:appendix:E_raw_lower_upper}.

  Let $\bq = \bSigma^{-1} \bsT \bx$ and
  observe that $\bq\tp \bhalpha\MLE$ is Gaussian (since $\bY$ is composed of Gaussian entries) with mean $\bq\tp \balpha$ by equivalence~\eqref{eq:N_alpha_equiv}, and covariance $(\bPsi\tp\bK^{-1}\bPsi)^{-1}$ by~\cref{eq:appendix:alpha_MLE_cov}.
  Then, by Gaussian concentration for any error probability $\delta \in (0,1)$ we have
  \begin{align*}
    \sP\pr{(\bq\tp \balpha - \bq\tp \bhalpha\MLE)^2 \geq \sqrt{2 \bq\tp (\bPsi\tp\bK^{-1}\bPsi)^{-1} \bq \ln\pr{\frac{2}{\delta}} }} \leq \delta
  \end{align*}
  which completes the proof.
\end{proof}
\section{Derivation of EM Steps}\label{appendix:m-step}
Recall that our goal is to solve
\[
  \max_{\cE'} \int \ln\pr{p(\bvartheta, \cD \,|\, \cE')} \diff p(\bvartheta \,|\, \cD, \wh{\cE}_t)~.
\]

First, we will focus on the integral.
The chain rule readily gives
\begin{align*}
  \ln p(\bTheta, \cD \,|\, \cE')
  =
  \ln p(\bTheta \,|\, \cD, \cE')
  +
  \ln p(\bTheta \,|\, \cE')\,.
\end{align*}

Using the same reasoning and notation as in the proof of~\cref{prop:posterior_ll} we get
\begin{align*}
  \int \ln p(\bvartheta \,|\, \cD, \cE') \diff p(\bvartheta \,|\, \cD, \wh{\cE}_t)
  &= \sum_{i=1}^n\sum_{j=1}^{m_i}
  \pr{
  \frac{1}{2}\ln\left(\frac{1}{\sigma^2}\right)
  - \frac{1}{2\sigma^2}
    \int (Y_{i,j} - \bx_{i,j}\tp\bvartheta_i)^2 \diff p(\bvartheta_i \,|\, \cD, \wh{\cE}_t)
  }
  + \const(\cE')\\
  &= \sum_{i=1}^n\sum_{j=1}^{m_i}
  \pr{
  \frac{1}{2}\ln\left(\frac{1}{\sigma^2}\right)
  - \frac{1}{2\sigma^2}
  (Y_{i,j} -\bx_{i,j}\tp\bmu_i)^2 - \bx_{i,j}\tp\bsT_i\bx_{i,j}
  }
  + \const(\cE')\,,
\end{align*}
using the fact that
$
\int (Y_{i,j} - \bx_{i,j}\tp\bvartheta_i)^2 \diff p(\bvartheta_i \,|\, \cD, \wh{\cE}_t)
= (Y_{i,j} -\bx_{i,j}\tp\bmu_i)^2
+ \bx_{i,j}\tp\bsT_i\bx_{i,j}
$
where we took $\btheta_i \sim \cN(\bmu_i,\bsT_i)$ according to~\cref{prop:posterior_ll}.

Now we compute the expected log-likelihood of the vector of task parameters:
\begin{align*}
  \int \ln p(\bvartheta \,|\, \cE') \diff p(\bvartheta \,|\, \cD, \wh{\cE}_t)
  = \frac{n}{2} \ln\det\bSigma^{-1}
  -  \frac12 \sum_{i=1}^n \int (\bvartheta_i - \balpha)\tp\bSigma^{-1} (\bvartheta_i - \balpha) \diff p(\bvartheta_i \,|\, \cD, \wh{\cE}_t)
  + \const(\cE')~.
\end{align*}

\paragraph{M-step for $\sigma^2$.} Now, note that since the likelihood of the vector of task variables $\bTheta$ does not depend on the parameter $\sigma^2$ we can solve for $\sigma^2$ based on the first order condition of the problem above. Differentiating the above equation with respect to $\sigma^{-2}$ (and ignoring the constant) gives
\begin{equation}
\sum_{i=1}^n\sum_{j=1}^{m_i}\left(\sigma^2
- \pr{(Y_{i,j} -\bx_{i,j}\tp\bmu_i)^2 + \bx_{i,j}\tp\bsT_i\bx_{i,j}}\right).
\end{equation}
while setting the derivative to zero gives
\begin{equation}
\sigma^2 =
\frac{1}{n}\sum_{i=1}^n\frac{1}{m_i}\sum_{j=1}^{m_i}
\left((Y_{i,j} - \bx_{i,j}\tp\bmu_i)^2 + \bx_{i,j}\tp\bsT_i\bx_{i,j}
\right).
\end{equation}

\paragraph{M-step for $\balpha$.}
Differentiating the objective w.r.t.\ $\balpha$ (and ignoring the constant) gives
$
\sum_{i=1}^n\bSigma^{-1}(\E[\btheta_i] - \balpha)
$
from which we get
\begin{equation}
\balpha = \sum_{i=1}^n\bmu_i~.
\end{equation}
\paragraph{M-step for $\bSigma$.}
Differentiating the expected log-likelihood of the vector
of task parameters with respect to $\bA = \bSigma^{-1}$ gives
\begin{equation}
  \sum_{i=1}^n
  \tr(\bSigma d\bA) - \tr
  \int
  \pr{(\bvartheta_i - \balpha)(\bvartheta_i - \balpha)\tp d\bA}
  \diff p(\bvartheta_i \,|\, \cD, \wh{\cE}_t)
\end{equation}
from which we get
\begin{equation}
\bSigma = \frac{1}{n}\sum_{i=1}^n
\E[(\btheta_i - \balpha)(\btheta_i - \balpha)\tp].
\end{equation}
Finally, computing the expectation
\begin{equation}
\sum_{i=1}^n\left(
\E[\btheta_i\btheta_i\tp] - 2\bmu_i\balpha\tp + \balpha\balpha\tp\right)
= \sum_{i=1}^n\left((\bmu_i - \balpha)(\bmu_i - \balpha)\tp
+ \bsT_i\right)
\end{equation}
shows the update for $\bSigma$.

\section{Selecting $\lambda$ in Biased Regression}
\label{sec:lambdaopt}
The parameter $\lambda$ is selected via random search in the following
way. For each of the 50 samples of $\lambda$ from log-uniform
distribution on interval $[0;100]$ we perform the following procedure
to estimate the risk $\wh{L}$. Firstly, we split the training tasks
into $K=10$ groups $\mc{S}_1,\dots,\mc{S}_K$ of (approximately)
equal size and compute the estimates $\bhalpha_k$ using the data
$\mc{S}^{\setminus k}$ from all of the groups excluding the group
$k$: $\mc{S}^{\setminus k} := \cup_{i\neq k}\mc{S}_i$. For each of
the estimated values $\bhalpha_k$ we perform adaptation to and
testing on the tasks in the group $\mc{S}_k$ using the given value
of $\lambda$. We split the samples of each task data $D_i\in\mc{S}_k$
randomly into adaptation and test sets $10$ times each time such
that the size of adaptation set is close to the size of adaptation
sets used with the actual test data. For each of the splits we
compute an estimate of the parameter vector $\wh\btheta_{k,i,l}$
where $k$ is the index of the group which was not used to estimate
$\wh\balpha_k$, $i$ is the index of a task data $D_i\in\mc{S}_k$,
$l$ is the index of a random split of the samples in that task into
adaptation and test sets.  With this parameter vector and using the
test set of the task $D_i\in\mc{S}_k$ we can also estimate the loss
$\wh{L}_{k,i,l}$ after which all the loss values are averaged:
\begin{equation*}
\wh{L} = \frac{1}{K}\sum_{k=1}^K\frac{1}{|\mc{S}^{\setminus k}|}
\sum_{i : D_i\in\mc{S}^{\setminus k}}\frac{1}{10}\sum_{l=1}^{10}
\wh{L}_{k,i,l}.
\end{equation*}
At the end we select the value of $\lambda$ which lead to the
smallest value of $\wh{L}$ using this cross-validation procedure.

%

\section{Supplementary Statements}
\label{appendix:supp_statements}

\begin{proposition}
  \label{prop:unrolling_lb}
  For $\bM$ (see \cref{eq:unbiased-lower-bound}) we have
  \begin{align*}
    \bM
    =
    \sigma^4 \cdot \pr{\bSigma \bX_n\tp \bX_n + \sigma^2 \bI}^{-1}
    \bA^{-1}
    \pr{\bSigma \bX_n\tp \bX_n + \sigma^2 \bI}^{-1}\,,
  \end{align*}
  where we denote
  \[
    \bA = \sum_{i=1}^n \bX_i\tp (\bX_i \bSigma \bX_i\tp + \sigma^2 \bI)^{-1} \bX_i~.
  \]
\end{proposition}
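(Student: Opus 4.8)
The plan is to reduce the identity to two elementary matrix manipulations and then substitute. First I would rewrite $\bM = \pr{\bsT\bSigma^{-1}}\pr{\bPsi\tp\bK^{-1}\bPsi}^{-1}\pr{\bsT\bSigma^{-1}}\tp$, using that $\bsT$ and $\bSigma^{-1}$ are symmetric so that $\bSigma^{-1}\bsT = \pr{\bsT\bSigma^{-1}}\tp$. It then suffices to (i) evaluate $\bPsi\tp\bK^{-1}\bPsi$ and (ii) simplify the ``preconditioner'' $\bsT\bSigma^{-1}$; plugging both into the displayed form gives the claim.

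For step (i) I would exploit the block structure coming from \eqref{eq:N_alpha_equiv}: there $\bK = \bX(\bI\otimes\bSigma)\bX\tp + \sigma^2\bI$ with $\bX$ the block-diagonal matrix whose diagonal blocks are $\bX_1,\dots,\bX_n$. Since both $\bX$ and $\bI\otimes\bSigma$ are block diagonal, $\bX(\bI\otimes\bSigma)\bX\tp$ is block diagonal as well, so $\bK$ is block diagonal with $i$th diagonal block $\bK_i := \bX_i\bSigma\bX_i\tp + \sigma^2\bI$; hence $\bK^{-1}$ is block diagonal with blocks $\bK_i^{-1}$. As $\bPsi$ stacks the $\bX_i$ vertically, this yields $\bPsi\tp\bK^{-1}\bPsi = \sum_{i=1}^n \bX_i\tp\bK_i^{-1}\bX_i = \bA$, and therefore $\pr{\bPsi\tp\bK^{-1}\bPsi}^{-1} = \bA^{-1}$.

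For step (ii) I would use $(\bA\bB)^{-1} = \bB^{-1}\bA^{-1}$ together with the definition $\bsT = \pr{\bSigma^{-1} + \tfrac{1}{\sigma^2}\bX_n\tp\bX_n}^{-1}$ from \cref{cor:E_cond_D} to ``multiply through by $\bSigma$'':
\[
  \bsT\bSigma^{-1} = \br{\bSigma\pr{\bSigma^{-1} + \tfrac{1}{\sigma^2}\bX_n\tp\bX_n}}^{-1} = \pr{\bI + \tfrac{1}{\sigma^2}\bSigma\bX_n\tp\bX_n}^{-1} = \sigma^2\pr{\bSigma\bX_n\tp\bX_n + \sigma^2\bI}^{-1}.
\]
Substituting this and $\pr{\bPsi\tp\bK^{-1}\bPsi}^{-1} = \bA^{-1}$ into $\bM = \pr{\bsT\bSigma^{-1}}\bA^{-1}\pr{\bsT\bSigma^{-1}}\tp$ produces the stated expression (the two outer factors are transposes of each other and coincide whenever $\bX_n\tp\bX_n$ commutes with $\bSigma$, which covers the isotropic-input settings used in the corollaries).

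I do not expect a real obstacle: the argument is essentially bookkeeping. The only two points that need a moment's thought are recognizing the block-diagonal structure of $\bK$ in step (i) --- this is exactly what makes $\bPsi\tp\bK^{-1}\bPsi$ collapse to the per-task sum $\bA$ --- and the ``multiply through by $\bSigma$'' identity in step (ii); everything else is direct substitution.
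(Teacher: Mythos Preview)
Your approach is essentially identical to the paper's: the paper also computes $\bPsi\tp\bK^{-1}\bPsi$ from the block-diagonal structure of $\bK$ and simplifies $\bsT\bSigma^{-1}$ by pulling $\bSigma$ through the inverse, then combines. Your parenthetical about the two outer factors being transposes that only coincide when $\bX_n\tp\bX_n$ and $\bSigma$ commute is in fact more careful than the paper, which writes both outer factors as $\pr{\bSigma\bX_n\tp\bX_n+\sigma^2\bI}^{-1}$ without comment; as you note, this is harmless for the downstream corollaries since they assume isotropic inputs.
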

\begin{proof}
  Recall that
  \[
    \bM = \bsT \bSigma^{-1} \pr{\bPsi\tp \bK^{-1} \bPsi}^{-1} \bSigma^{-1} \bsT
  \]
and observe that
\begin{align*}
  \bK^{-1}
  &= \begin{bmatrix}
    (\bX_1 \bSigma \bX_1\tp + \sigma^2 \bI)^{-1} & \bzero & \dots & \bzero\\
    \bzero & (\bX_2 \bSigma \bX_2\tp + \sigma^2 \bI)^{-1} & \dots & \bzero\\
    \vdots & & \ddots & \vdots\\
    \bzero & \bzero & \dots & (\bX_n \bSigma \bX_n\tp + \sigma^2 \bI)^{-1}
  \end{bmatrix}\,,
\end{align*}
which in turn implies
\begin{align*}
  \bPsi\tp \bK^{-1} \bPsi
  =
  \sum_{i=1}^n \bX_i\tp (\bX_i \bSigma \bX_i\tp + \sigma^2 \bI)^{-1} \bX_i~.
\end{align*}
On the other hand,
\begin{align*}
  \bsT \bSigma^{-1}
  &= \pr{\bSigma^{-1} + \frac{1}{\sigma^2} \bX_n\tp \bX_n}^{-1} \bSigma^{-1}\\
  &= \sigma^2 \pr{\sigma^2 \bI + \bSigma \bX_n\tp \bX_n}^{-1}~.
\end{align*}
Combining the above gives the statement.
\end{proof}

\begin{lemma}
  \label{lem:lambda_Sigma_Tau}
  In the following assume that $\bX_i\tp \bX_i = \frac{m_i}{d} \bI$ for all $i.$  
  Let $\lambda_j(\bSigma)$ be the $j$th eigenvalue of $\bSigma$.
  Then,
  \begin{align*}
    \lambda_j(\bM) = \sigma^4 \cdot \frac{d^2}{\pr{m_n \lambda_j(\bSigma) + d \sigma^2}^2} \cdot \frac{\text{HM}\pr{\lambda_j(\bSigma) + \frac{d \sigma^2}{m_i}}_{i=1}^n}{n}\,,
  \end{align*}
  where $\text{HM}( z_i )_{i=1}^n$ denotes the harmonic mean of sequence $(z_i)_{i=1}^n$.
  Moreover,
  \begin{align*}
    \lambda_j(\bsT) = \frac{d \sigma^2 \lambda_j(\bSigma)}{d \sigma^2 + m_n \lambda_j(\bSigma)}~.
  \end{align*}
  
  Finally, the eigenvectors of $\bM$ and $\bsT$ coincide with the eigenvectors of $\bSigma$.
\end{lemma}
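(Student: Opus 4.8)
The plan is to exploit that under the isotropy assumption $\bX_i\tp\bX_i = \frac{m_i}{d}\bI$ every matrix appearing in $\bM$ and $\bsT$ is a rational function of $\bSigma$ alone, hence simultaneously diagonalizable with $\bSigma$; the lemma then collapses to bookkeeping with scalar eigenvalues.

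First I would invoke \cref{prop:unrolling_lb}, which already gives $\bM = \sigma^4 \pr{\bSigma\bX_n\tp\bX_n + \sigma^2\bI}^{-1} \bA^{-1} \pr{\bSigma\bX_n\tp\bX_n + \sigma^2\bI}^{-1}$ with $\bA = \sum_{i=1}^n \bX_i\tp(\bX_i\bSigma\bX_i\tp + \sigma^2\bI)^{-1}\bX_i$. Substituting $\bX_n\tp\bX_n = \frac{m_n}{d}\bI$ shows $\bSigma\bX_n\tp\bX_n + \sigma^2\bI = \frac{m_n}{d}\bSigma + \sigma^2\bI$, whose $j$th eigenvalue in the eigenbasis of $\bSigma$ is $(m_n\lambda_j(\bSigma) + d\sigma^2)/d$. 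The one nonroutine step is simplifying the summands of $\bA$: by the push-through identity $\bX\tp(\bX\bSigma\bX\tp + \sigma^2\bI)^{-1} = (\tfrac{1}{\sigma^2}\bX\tp\bX\bSigma + \bI)^{-1}\tfrac{1}{\sigma^2}\bX\tp$ (equivalently Woodbury), one obtains
\[
  \bX_i\tp(\bX_i\bSigma\bX_i\tp + \sigma^2\bI)^{-1}\bX_i = \frac{m_i}{d\sigma^2}\pr{\frac{m_i}{d\sigma^2}\bSigma + \bI}^{-1},
\]
whose $j$th eigenvalue is $m_i/(m_i\lambda_j(\bSigma) + d\sigma^2) = (\lambda_j(\bSigma) + d\sigma^2/m_i)^{-1}$.

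Summing over $i$ gives $\lambda_j(\bA) = \sum_{i=1}^n (\lambda_j(\bSigma) + d\sigma^2/m_i)^{-1}$, which by the definition of the harmonic mean equals $n/\text{HM}(\lambda_j(\bSigma) + d\sigma^2/m_i)_{i=1}^n$, so $\lambda_j(\bA^{-1}) = \tfrac1n \text{HM}(\lambda_j(\bSigma) + d\sigma^2/m_i)_{i=1}^n$. Multiplying the three commuting factors of $\bM$ yields the claimed formula for $\lambda_j(\bM)$. The identity for $\bsT = (\bSigma^{-1} + \tfrac{m_n}{d\sigma^2}\bI)^{-1}$ is immediate: its $j$th eigenvalue is $(\lambda_j(\bSigma)^{-1} + m_n/(d\sigma^2))^{-1} = d\sigma^2\lambda_j(\bSigma)/(d\sigma^2 + m_n\lambda_j(\bSigma))$. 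The eigenvector claim follows because $\bX_i\tp\bX_i$ is a scalar multiple of $\bI$ and hence commutes with $\bSigma$, so $\bM$, $\bsT$ and $\bSigma$ are all diagonal in one common orthonormal basis.

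The main obstacle is the push-through/Woodbury simplification of $\bX_i\tp(\bX_i\bSigma\bX_i\tp + \sigma^2\bI)^{-1}\bX_i$; once this is in hand only scalar arithmetic remains. One should additionally check the singular case, where $\bSigma^{-1}$ is read as the pseudo-inverse, by restricting every computation to $\Range(\bSigma)$, which leaves all the steps above intact.
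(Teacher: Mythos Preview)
Your proposal is correct and follows essentially the same route as the paper: invoke \cref{prop:unrolling_lb}, use the push-through identity to reduce each summand of $\bA$ to a function of $\bSigma$ alone under the isotropy assumption, then read off eigenvalues in the common eigenbasis; the paper carries out the same steps by writing $\bSigma = \bU\bLambda\bU\tp$ explicitly and tracking $\bU$ through each factor. Your remark on the pseudo-inverse case is a small addition not present in the paper's proof but is consistent with how the paper handles low-rank $\bSigma$ elsewhere.
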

\begin{proof}
  We first characterize eigenvalues of matrix $\bM$.
By~\cref{prop:unrolling_lb},
  \[
    \bM
  =
  \sigma^4 \cdot \pr{\bSigma \bX_n\tp \bX_n + \sigma^2 \bI}^{-1}
  \bA^{-1}
  \pr{\bSigma \bX_n\tp \bX_n + \sigma^2 \bI}^{-1}~.
\]
We start with $\bA^{-1}$, and by the spectral theorem, $\bSigma = \bU \bLambda \bU\tp$ for some unitary $\bU$ and diagonal $\bLambda$:
\begin{align*}
  \bA^{-1} = \pr{\sum_{i=1}^n \bX_i\tp (\bX_i \bSigma \bX_i\tp + \sigma^2 \bI)^{-1} \bX_i}^{-1}
  &=
    \pr{\sum_{i=1}^n (\bSigma \bX_i\tp \bX_i + \sigma^2 \bI)^{-1} \bX_i\tp \bX_i}^{-1}\\
  &=
    \pr{\sum_{i=1}^n \pr{\bSigma \cdot \frac{m_i}{d} + \sigma^2 \bI}^{-1} \frac{m_i}{d}}^{-1}\\
  &=
    \pr{\sum_{i=1}^n \pr{\bU \bLambda \bU\tp \cdot \frac{m_i}{d} + \sigma^2 \bI}^{-1} \frac{m_i}{d}}^{-1}\\
  &=
    \bU \pr{\sum_{i=1}^n \pr{\bLambda + \frac{d \sigma^2}{m_i} \cdot \bI}^{-1}}^{-1}  \bU\tp~.
\end{align*}
Now,
\begin{align*}
  \pr{\bSigma \bX_n\tp \bX_n + \sigma^2 \bI}^{-1}
  &=
    \pr{\bSigma \cdot \frac{m_n}{d} + \sigma^2 \bI}^{-1}\\
  &=
    \pr{\bU \bLambda \bU\tp \cdot \frac{m_n}{d} + \sigma^2 \bI}^{-1}\\
  &=
    \bU \pr{\bLambda  \cdot \frac{m_n}{d} + \sigma^2 \bI}^{-1} \bU\tp.
\end{align*}
Thus,
\begin{align*}
  \bM = \bU \pr{\pr{\bLambda  \cdot \frac{m_n}{d} + \sigma^2 \bI}^2 \sum_{i=1}^n \pr{\bLambda + \frac{d \sigma^2}{m_i}}^{-1}}^{-1} \bU\tp~.
\end{align*}
and moreover the $j$th eigenvalue of $\bM$ is
\begin{align*}
  \lambda_j(\bM)
  &= \frac{1}{\pr{\frac{m_n}{d} \lambda_j(\bSigma) + \sigma^2}^2} \cdot \frac{1}{\sum_{i=1}^n \frac{1}{\lambda_j(\bSigma) + \frac{d \sigma^2}{m_i}}}\\  
  &= \frac{1}{\pr{\frac{m_n}{d} \lambda_j(\bSigma) + \sigma^2}^2} \cdot \frac{\text{HM}\pr{\lambda_j(\bSigma) + \frac{d \sigma^2}{m_i}}_{i=1}^n}{n}\,.
\end{align*}
where recall that $\text{HM}(z_i)_{i=1}^n$ denotes the harmonic mean of sequence $(z_i)_{i=1}^n$.

Using the same arguments as above
\begin{align*}
  \bsT
  &= \pr{\bSigma^{-1} + \frac{1}{\sigma^2} \bX_n\tp \bX_n}^{-1}\\
  &= \pr{\bU \bLambda^{-1} \bU\tp + \frac{m_n}{d \sigma^2}}^{-1}
\end{align*}
and so
\begin{align*}
  \lambda_j(\bsT) = \frac{1}{\frac{1}{\lambda_j(\bSigma)} + \frac{m_n}{d \sigma^2}} = \frac{d \sigma^2 \lambda_j(\bSigma)}{d \sigma^2 + m_n \lambda_j(\bSigma)}~.
\end{align*}
Finally, in both cases of $\bM$ and $\bsT$ we observe that their eigenvectors are eigenvectors of $\bSigma$.
\end{proof}

\paragraph{\cref{prop:unrolled_lb_Sigma_cases,cor:unrolled_lb_Sigma_cases2} (restated).}
\emph{In the following assume that $\bX_i\tp \bX_i = \frac{m_i}{d} \bI$ for all $i.$
  For $\bSigma = \tau^2 \bI$, any $\bx \in \R^d$, and any $c > 0$,}
\begin{align*}
  c \bx\tp \bM \bx
  +
  \bx\tp \bsT \bx
  +
  \sigma^2
  =
  c \cdot \frac{H_{\tau^2}}{n} \cdot \frac{d^2 \sigma^4}{\pr{\tau^2 m_n + d \sigma^2}^2} \cdot \|\bx\|^2 +
  \frac{d \sigma^2 \tau^2}{\tau^2 m_n + d \sigma^2} \cdot \|\bx\|^2
  +
  \sigma^2~,
\end{align*}
\emph{where $H_{\tau^2}$ is a harmonic mean of the sequence $\pr{\tau^2 + \frac{d \sigma^2}{m_i}}_{i=1}^n$.}

\emph{Moreover, let $\bSigma$ be a \ac{PSD} matrix of rank $s \leq d$ with eigenvalues $\lambda_1 \geq \ldots \geq \lambda_s > 0$.
  Then for any $\bx \in \R^d$ and any $c > 0$,}
\begin{align*}
  c \bx\tp \bM \bx
  +
  \bx\tp \bsT \bx
  +
  \sigma^2
  \geq
  c \cdot \frac{H_{\lambda_s}}{n} \cdot \frac{d^2 \sigma^4}{\pr{\lambda_1 m_n + d \sigma^2}^2} \cdot \|\bx\|_{\bP_s\tp \bP_s}^2 +
  \frac{d \sigma^2 \lambda_s}{\lambda_s m_n + d \sigma^2} \cdot \|\bx\|_{\bP_s\tp \bP_s}^2
  +
  \sigma^2
\end{align*}
where $\bP_s = [\bu_1, \ldots, \bu_s]\tp$ and $(\bu_j)_{j=1}^s$ are eigenvectors of $\bSigma$.
\begin{proof}
  Recalling that by~\cref{prop:unrolling_lb},
  \[
    \bM
  =
  \sigma^4 \cdot \pr{\bSigma \bX_n\tp \bX_n + \sigma^2 \bI}^{-1}
  \bA^{-1}
  \pr{\bSigma \bX_n\tp \bX_n + \sigma^2 \bI}^{-1}~.
\]
and using \cref{lem:lambda_Sigma_Tau} with $\bSigma = \tau^2 \bI$ we get the first result.

Now we turn to the low-rank case.
We start by considering a \ac{PSD} matrix $\bSigma_{\ve}$ with $s$ eigenvalues $\lambda_1 \geq \ldots \geq \lambda_s > 0$ and remaining $d-s$ are $\ve > 0$.
Denote also by $\bM_{\ve}$, $\bsT_{\ve}$ matrices w.r.t. $\bSigma_{\ve}$.
The idea is to lower bound $\bx\tp \bM_{\ve} \bx$ and $\bx\tp \bsT_{\ve} \bx$ and then analyze a limiting behavior as $\ve \to 0$.

By \cref{lem:lambda_Sigma_Tau}, $\bM_{\ve}$, $\bsT_{\ve}$, and $\bSigma_{\ve}$ share the same eigenvectors $\bu_1, \ldots, \bu_s$, and so
\begin{align*}
  c \bx\tp \bM_{\ve} \bx
  +
  \bx\tp \bsT_{\ve} \bx
  &=  
  c \sum_{j=1}^d \pr{\bu_j\tp \bx}^2 \lambda_j(\bM_{\ve})
  +
  \sum_{j=1}^d \pr{\bu_j\tp \bx}^2 \lambda_j(\bsT_{\ve})\\
  &=
    c \cdot \sum_{j=1}^s \frac{H_{\lambda_j}}{n} \cdot \frac{\sigma^4}{\pr{\lambda_j \frac{m_n}{d} + \sigma^2}^2} \pr{\bu_j\tp \bx}^2
    +
    c \cdot \underbrace{\frac{H_{\ve}}{n} \cdot \frac{\sigma^4}{\pr{\ve \frac{m_n}{d} + \sigma^2}^2}}_{(a)} \pr{ \sum_{j=s+1}^d \pr{\bu_j\tp \bx}^2 }\\
  &+
    \sum_{j=1}^s \frac{\sigma^2 \lambda_j}{\lambda_j \frac{m_n}{d} + \sigma^2} \pr{\bu_j\tp \bx}^2
    + \frac{\sigma^2 \ve}{\ve \frac{m_n}{d} + \sigma^2} \pr{ \sum_{j=s+1}^d \pr{\bu_j\tp \bx}^2 }~.
\end{align*}
Now,
\begin{align*}
  &\lim_{\ve \to 0}\pr{c \bx\tp \bM_{\ve} \bx
  +
  \bx\tp \bsT_{\ve} \bx}\\
  &=
    c \cdot \sum_{j=1}^s \frac{H_{\lambda_j}}{n} \cdot \frac{\sigma^4}{\pr{\lambda_j \frac{m_n}{d} + \sigma^2}^2} \pr{\bu_j\tp \bx}^2
    +
    \frac{d \sigma^2}{M} \sum_{j=s+1}^d \pr{\bu_j\tp \bx}^2
  +
    \sum_{j=1}^s \frac{\sigma^2 \lambda_j}{\lambda_j \frac{m_n}{d} + \sigma^2} \pr{\bu_j\tp \bx}^2\\
  &\geq
    c \cdot \frac{H_{\lambda_s}}{n} \cdot \frac{\sigma^4}{\pr{\lambda_1 \frac{m_n}{d} + \sigma^2}^2} \sum_{j=1}^s \pr{\bu_j\tp \bx}^2        
    +
    \frac{\sigma^2 \lambda_s}{\lambda_s \frac{m_n}{d} + \sigma^2} \sum_{j=1}^s \pr{\bu_j\tp \bx}^2\,,
\end{align*}

where we note that the limit of term $(a)$ is handled as
\begin{align*}
  \lim_{\ve \to 0}\frac{1}{\sum_{i=1}^n \frac{1}{\ve + \frac{d \sigma^2}{m_i}}} \cdot \frac{\sigma^4}{\pr{\ve \frac{m_n}{d} + \sigma^2}^2} = \frac{d \sigma^2}{M} \geq 0~.
\end{align*}

\end{proof}

\section{Further Experimental Details and Results}
\label{sec:apxexps}
In this section we provide extra figures for our experimental results.
\cref{fig:fourier-examples} is complemented with
\cref{fig:fourier-examples2},  adding a second example in addition to the one shown in the previous figure.

\begin{figure*}
\centering
\includegraphics[width=0.31\linewidth]{fourier/example-task-0-ntasks-10.png}
\includegraphics[width=0.31\linewidth]{fourier/example-task-0-ntasks-50.png}
\includegraphics[width=0.31\linewidth]{fourier/example-task-0-ntasks-100.png}
\includegraphics[width=0.31\linewidth]{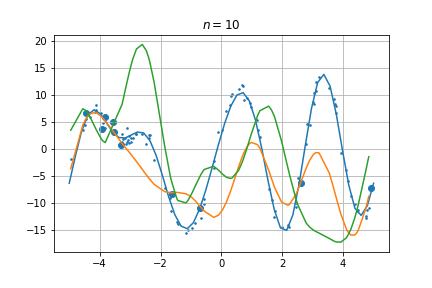}
\includegraphics[width=0.31\linewidth]{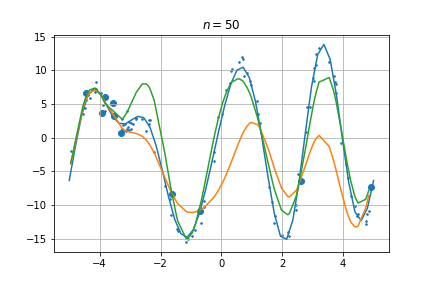}
\includegraphics[width=0.31\linewidth]{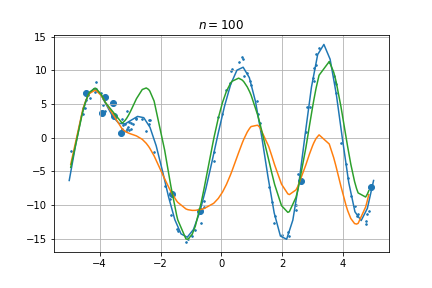}
\caption{
Two sets of examples of predictions on the synthetic,  `Fourier' meta-learning problem.
Top and bottom rows correspond to different (random) instances; the top row
in fact replicates \cref{fig:fourier-examples}.
Training data is shown in bold, small dots show test data.
We also show the predictions for two learners (at every input) and the target function.
The column correspond to outputs
obtained training on $n\in\{10,50,100\}$ tasks.
}
\label{fig:fourier-examples2}
\end{figure*}

For completeness, the pseudocode of \ac{MoM} is given in \cref{alg:mom}.
%
\begin{algorithm}[H]
\caption{\ac{MoM} Estimator for Learning Linear Features of \citep{tripuraneni2020provable}}\label{alg:mom}
\begin{algorithmic}
  \REQUIRE $\pr{(\bx_{1,j}, y_{1,j})}_{j=1}^{m_1}, \ldots, \pr{(\bx_{m_{n-1},j}, y_{m_{n-1},j})}_{j=1}^{m_{i-1}}$ --- training examples from $n-1$ past tasks, $s$ --- problem rank.
  \STATE $\bU \bD \bV\tp \gets \text{SVD}\pr{\frac{1}{M-m_n} \sum_{i=1}^{n-1} \sum_{j=1}^{m_n} y_{i,j}^2 \bx_{i,j}\bx_{i,j}\tp}$
  \STATE $\bhB \gets [D_{1,1} \bu_1, \ldots, D_{s,s} \bu_s]$
\STATE \textbf{return} $\bhB$
\end{algorithmic}
\end{algorithm}

\end{document}